\documentclass{article}
\usepackage{iclr2027_conference,times}

\iclrfinalcopy

\usepackage{amsmath,amssymb}
\usepackage{thmtools}
\usepackage{amsthm}
\usepackage{mathtools}
\usepackage{thm-restate}
\usepackage{xcolor}
\usepackage{enumitem}
\usepackage{wrapfig}
\usepackage{comment}
\usepackage{algorithm}
\usepackage{graphicx}
\usepackage{wrapfig}
\usepackage{booktabs}
\usepackage{caption}
\usepackage{tabularx}
\usepackage{float}
\usepackage{bm}

\makeatletter
\newcommand{\stopwrapping}{%
  \par
  \ifx\parshape\WF@fudgeparshape
    \nobreak
    \ifnum\c@WF@wrappedlines>\@ne
      \advance\c@WF@wrappedlines\m@ne
      \vskip\c@WF@wrappedlines\baselineskip
      \global\c@WF@wrappedlines\z@
    \fi
    \allowbreak
    \WF@finale
  \fi
  \WFclear
}
\makeatother

\usepackage{algpseudocode}
\usepackage[
colorlinks=true,
linkcolor=blue!60!black,
citecolor=green!40!black,
urlcolor=blue!60!black]{hyperref}
\usepackage[noabbrev,capitalise,nameinlink]{cleveref}
\newtheorem{theorem}{Theorem}
\newtheorem{lemma}{Lemma}

\newtheorem{corollary}{Corollary}
\newcommand{\ip}[2]{\langle #1,#2\rangle}
\newcommand{\eps}{\varepsilon}
\newcommand{\E}{\mathbb E}
\newcommand{\R}{\mathbb R}

\newcommand{\OPTattn}{\mathbf{OPT}_{\mathsf{Attn}}}
\newcommand{\Attn}{\operatorname{Attn}}
\DeclareMathOperator{\Var}{Var}
\newcounter{todocounter}



\newtheorem{definition}[theorem]{Definition}

\setlist{nosep,leftmargin=*}

\definecolor{mainterm}{gray}{0}
\definecolor{secondary}{gray}{0.5}
\newcommand{\mainterm}[1]{{\color{mainterm}\bm{#1}}}
\newcommand{\secondary}[1]{{\color{secondary}#1}}
\title{Attention under Bounded Key–Query Similarity: Space Complexity and Transformer Anisotropy}

\author{
Justin Y. Chen, Ying Feng \& Piotr Indyk \\ Massachusetts Institute of Technology (MIT). \\
\texttt{\{justc,yingfeng,indyk\}@mit.edu} \\
\AND
Michael Kapralov, Ekaterina Kochetkova \& Boris Prokhorov \\
École Polytechnique Fédérale de Lausanne (EPFL). \\
\texttt{\{firstname.lastname\}@epfl.ch}
}

\date{}
\begin{document}
\maketitle
\fancyhead{}
% \vspace{-2.6em}

\begin{abstract}
% must be one paragraph
A growing body of work on KV-cache compression aims to reduce the memory requirements of autoregressive generation. These methods compress the stored context while approximately preserving attention computation, enabling generation with reduced memory consumption. Almost tight theoretical results on the worst case space complexity of attention have recently been obtained, but they are achieved by instances where queries can be perfectly aligned with keys. This directly contradicts the observed anisotropy of the key/query geometry in real transformers. In particular, it has been shown empirically that key/query cosine similarities are small even for most correlated pairs, i.e., near-perfect alignment of queries with keys does not happen in practice. In this paper, we study the space complexity of attention approximation under the assumption that key/query cosine similarities belong to a small interval around zero, i.e., a query cannot align too much with {\em any} key. We give a nearly space optimal algorithm and a corresponding space lower bound. Surprisingly, the distribution of key/query cosine similarities in our lower bound exhibits the {\em anisotropy} property empirically established and studied in the literature. Additionally, we show that the simple  {\em uniform sampling} algorithm is nearly optimal for a typical range of parameters, supporting its surprising effectiveness on real datasets.
\end{abstract}

%As per Ying, should perhaps cite~\cite{wang2026randomattention}\eee{we did, so remove? or do we want to cite in the abstract?} \boris{I think remove, we decided to not cite anyone here}

\section{Introduction}

  The high memory consumption of modern large language models (LLMs) is a major challenge in their deployment, as the decoding stage of transformers requires storage of all tokens seen so far. The growing body of work on Key-Value (KV) cache compression has been proposed to address this challenge. %These algorithms compress the tokens so that generation can be done with reduce memory consumption. 
    These algorithms rely on innovative ways of approximating the attention function of the context using small space, either by {\em quantizing} the vectors, or {\em evicting} some of them to retain the most informative ones. In both cases, the goal is to approximate the value of the attention function:
\begin{equation}\label{eq:attention-definition}
    \Attn(K,V,q)
    =\frac{\sum_{i=1}^n e^{\ip{k_i}{q}/\sqrt{d}}v_i}
           {\sum_{i=1}^n e^{\ip{k_i}{q}/\sqrt{d}}}.
\end{equation}
where $K=(k_1,\ldots,k_n)\subset\R^d$ is the set of all {\em keys},
$V=(v_1,\ldots,v_n)\subset\R^d$ is the set of corresponding {\em values} and query $q\in\R^d$ is the {\em query}. 

We consider randomized algorithms for token eviction that are \emph{query-oblivious}.
That is, before seeing the specific query, the algorithm constructs a weighted summary by selecting a random 
subset of token indices $I\subseteq[n]$ together with positive weights $w_i$ (and evicts the other tokens). Given
a query $q$, the algorithm returns
\begin{equation}\label{eq:attention-model}
    \widehat{\Attn}(K,V,q)
    =\frac{\sum_{i\in I}w_i e^{\ip{k_i}{q}/\sqrt{d}}v_i}
           {\sum_{i\in I}w_i e^{\ip{k_i}{q}/\sqrt{d}}}.
\end{equation}
For every fixed query $q$, we require
\begin{equation}\label{eq:attention-guarantee}
    \Pr\!\left[
        \bigl\|\widehat{\Attn}(K,V,q)-\Attn(K,V,q)\bigr\|
        \le\eps
    \right]\ge\frac23.
\end{equation}
The probability is over the randomness used to construct the
summary. The query is fixed independently of this randomness; we
do not require simultaneous success for all queries. Assume that $\|k_i\|\le r_k$ for every $i$ and $\|q\|\le r_q$. We will assume $\|v_i\|\le1$. 

  % \ying{Could someone check how $\sqrt{d}$ carries over in the following paragraphs? I think some $\exp(r_k r_q)$ now becomes $\exp(r_k r_q/\sqrt{d})$ but I'm not 100\% sure.} 
  % Boris: I believe it is fixed
  
  Query-oblivious weighted token eviction (a.k.a.\ coresets) has been a subject of multiple recent theoretical investigations, leading to (almost) tight bounds on the {\em worst case} space complexity ~\cite{HarisO25,libertyak26,chen2026tight}, see \Cref{tab:related_work} for details. 
  % \piotr{Table in Related Work?}
  These results show that an $e^{r_k r_q/\sqrt{d}}$ dependence in the space bound is sufficient and necessary. While these results are close to optimal {\em in the worst case}, some of their implications significantly deviate from what one sees in practical instances:
  \begin{enumerate}
      \item  The space lower bound of $e^{r_k r_q/\sqrt{d}}$  can be much larger than the size of the input itself, given that the values of $r_{k},r_{q}$ are often on the order of $15-20$ (see, for example, Table~3 in~\cite{kochetkova2025balancekv} and our \Cref{fig:norms}) and $d = 128$; in such cases, the upper bound becomes vacuous.
      \item The hard distribution of the keys used to prove the lower bounds is isotropic, whereas in practice, keys and queries are known to be {\em anisotropic}, concentrating in a relatively narrow cone of the high-dimensional space~\cite{GodeyCS24}.
      \item The worst case hard instances have the queries align perfectly (i.e., have a high dot product) with one of the keys; in contrast, in practice, the key-query cosine similarity is small in absolute value once the attention sink is excluded and stored separately (see e.g., \cite{mazare2025saap} and our \Cref{fig:cos_sim_main}).
  \end{enumerate}

\subsection{Our results}
Reflecting this discrepancy, we study the space complexity of attention approximation under the following assumption:
    \begin{center}
    {\bf ($\star$)}  Every key/query cosine similarity  lies in some range $[a, b]$, where the magnitudes $|a|, |b| \ll 1$, i.e., no query can align significantly with {\em any} key.    
    \end{center}

Before we state our main theorem, we need to introduce some notation. Fix radius parameters $r_k, r_q$.
Suppose $\|k_i\|\le r_k$ for every $i$, and fix a pair of parameters
$a, b \in [-1,1]$. We take the admissible query set to be\footnote{Throughout this paper, we assume that $\|k_i\|, \|q\|$ are non-zero.}
\begin{align}
\begin{split}\label{eq:set}
    Q:=\Bigl\{q\in\R^d\!: \,
       \|q\|\le r_q,\ a \leq \frac{\ip{k_i}{q}}{\|k_i\|\|q\|}\le b \quad
     \forall i\in[n]\Bigr\}.
\end{split}
\end{align}

\begin{wrapfigure}{l}{0.35\textwidth}    \centering
    \begin{minipage}[]{\linewidth}
        \vspace{-5mm}
        \includegraphics[width=\linewidth]{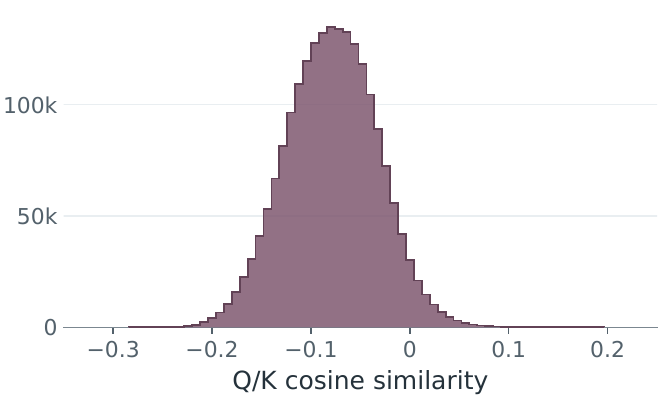}
    \end{minipage}
    \caption{Cosine similarities between the final 256 queries and preceding context keys in layer 8 of Qwen3-8B, for an 8K-token LongBench v2 input, excluding the first key (attention sink).}
    \label{fig:cos_sim_main}
    \vspace{-5mm}
\end{wrapfigure}

Let $\OPTattn$ denote the optimal worst-case number of retained
tokens subject to
\[
 \Pr\!\left[
 \left\|
 \widehat{\operatorname{Attn}}(K,V,q)
 -
 \operatorname{Attn}(K,V,q)
 \right\|\le\eps
 \right]\ge\frac23
\]
for every fixed admissible $q$. 
The algorithm knows keys, values, and parameters $r_k, r_q, a, b$, but
not the particular query in $Q$.
All norms are Euclidean, and all logarithms are natural. We use
$\widetilde O$ and $\widetilde\Omega$ to suppress factors
polylogarithmic in $n$.

% \stopwrapping

Our main result shows that importance sampling is a nearly optimal token eviction policy for attention approximation. Surprisingly, for a wide range of parameters (see \Cref{fig:importance-sampling}) all importance weights can be set to $1$, i.e. the most basic uniform sampling is optimal.  Formally, we define two regions in the space of parameters $(a, b)$, see also \Cref{fig:importance-sampling} for an illustration:

\begin{definition}[Uniform \& importance sampling regimes]\label{def:small_cos_sim}
We say that fixed parameters $-1\le a\le b\le1$ lie in the
\emph{uniform sampling} regime if $a<0<b$ and $b - a \leq \frac{1+b}{2}$.
% \Leftrightarrow b-2a\le1$.
% \boris{should we change to $b - a < \frac{1 + b}{2}$? Seems more intuitive given our theorem.} \eee{edited like this; for figures $b - 2a \leq 1$ is better}

We say that fixed parameters $-1\le a\le b\le1$ lie in the
\emph{importance sampling} regime in any other case.
\end{definition}

Our main result is as follows.

\begin{theorem}
\label{thm:attention}
Fix any constant $\eps\in(0,1/2]$ independently of $n$, and assume
$d=\omega(\log n)$. Fix parameters $r_k,r_q>0$ and $-1\le a<b\le1$. Suppose $\|k\| \leq r_k$ for every key, and the admissible query set $Q$ is described as \cref{eq:set}.
Let $R^2=r_kr_q/\sqrt d$, then

\begin{enumerate}
    \item\label{item:uniform} In the \emph{uniform sampling} regime, uniform sampling
    achieves asymptotically tight space bounds:
    \[
    \secondary{
        \mainterm{\Omega\Bigl(}
            {\min\left\{n,\, \mainterm{e^{(b-a)R^2}}\right\}}
            \cdot e^{-o(R^2)}
        \mainterm{\Bigr)}
    }
    \ \le\ \OPTattn\ \le\
    \secondary{
        \mainterm{O\Bigl(}
            {\min\left\{n,\, \mainterm{e^{(b-a)R^2}}\right\}}
        \mainterm{\Bigr)}
    }.
    \]

    \item\label{item:importance} In the \emph{importance sampling} regime, importance sampling
    achieves asymptotically tight space bounds:
    \[
    \secondary{
        \mainterm{\Omega\Bigl(}
            {\min\left\{n,\, \mainterm{e^{\frac{1+b}{2}R^2},\, e^{(b-a)R^2}}\right\}}
            \cdot e^{-o(R^2)}
        \mainterm{\Bigr)}
    }
    \ \le\ \OPTattn\ \le\
    \secondary{
        \mainterm{O\Bigl(}
            {\min\left\{n,\, \mainterm{e^{\frac{1+b}{2}R^2},\, e^{(b-a)R^2}}\right\}}
        \mainterm{\Bigr)}
    }.
    \]
\end{enumerate}

% Uniform sampling attains the upper bound in the small cosine similarities regime. Importance sampling attains the upper bound for every regime.
% The $o(R^2)$ term in the lower bound  is $O(R^2\cdot\sqrt{\log n/d}) = O\bigl(\frac{r_k r_q \sqrt{\log n}}{d}\bigr) \approx 10$ in practice.

% Boris: oops, o(1) term is big... But this is ok, since our bound in this case is also big.
\end{theorem}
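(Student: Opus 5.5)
The theorem is proved in two independent halves: sampling upper bounds, and a hard key distribution for the lower bounds. Throughout we write $s_i=\ip{k_i}{q}/\sqrt d$. For an admissible $q$ we have $s_i\in[a\,\rho_i,\,b\,\rho_i]$ (for $a<0<b$), where $\rho_i=\|k_i\|\|q\|/\sqrt d\le R^2$. The $\min\{n,\cdot\}$ in every bound is trivial on the upper side, since storing everything costs $n$. On the lower side it is handled by capping the number of planted keys at $n$.

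\textbf{Upper bounds.} For uniform sampling we keep $m$ indices i.i.d.\ uniformly with weights $n/m$. We bound numerator and denominator separately, using Chebyshev and a union bound over the two estimates. The denominator estimate $\frac nm\sum_{j}e^{s_{i_j}}$ has relative variance at most $\frac1m\cdot\max_i e^{s_i}/\min_i e^{s_i}$. The numerator, with $\|v_i\|\le1$, has the same bound. A constant relative error in the denominator and an additive $\eps\cdot$(denominator) error in the numerator give $\eps$ error in $\Attn$. So $m=O(\max_i e^{s_i}/\min_i e^{s_i})$ suffices.

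The crude ratio bound $e^{(b-a)R^2}$ uses the extremes at full norm, but a sharper computation helps. Let $\mu=\frac1n\sum e^{s_i}$; the real quantity is $\max_i e^{s_i}/\mu$. Since every $s_i\ge a\rho_i\ge aR^2$, this gives $m=O(e^{(b-a)R^2})$ in both regimes.

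For importance sampling we sample $i$ with probability $p_i\propto \max_{q\in Q}e^{s_i(q)}=e^{b\rho_i^{\max}}$, where $\rho_i^{\max}=\|k_i\|r_q/\sqrt d$, and use weights $1/p_i$. The key step is bounding the sensitivity sum $\sum_i \sup_q e^{s_i(q)}/\sum_j e^{s_j(q)}$ by $O(e^{\frac{1+b}2R^2})$. Heuristically, a key can only dominate if $q$ is nearly aligned with it, but alignment is capped at $b$. I would bound the sum by $1+\sup_q\sum_i e^{s_i}\cdot(\dots)$ via the geometric fact that, at any fixed $q$, keys with $s_i\ge t$ cost at least $e^{t}$ each in the denominator. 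This is the standard ``total sensitivity'' argument, combined with a Gaussian-like concentration of $\ip{k_i}{q}$ over directions when $d=\omega(\log n)$. Taking the minimum of the two samplers gives item~\ref{item:importance}.

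\textbf{Lower bounds.} For the lower bounds we reduce from an indexing (communication) problem. We place $N=\min\{n,e^{cR^2}\}$ keys of norm $r_k$ in near-orthogonal random directions; this is possible since $d=\omega(\log n)$, so pairwise inner products are $O(\sqrt{\log n/d})$. We also place an optional background block. For each hidden key $j$ we design a query $q_j$ with cosine $b$ to $k_j$ and cosine $a$ (or approximately $a$) to the rest, up to $\pm O(\sqrt{\log n/d})$. That slack is exactly the $e^{-o(R^2)}$ loss.

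Values are random signs $\pm e_1$, so $\Attn(K,V,q_j)$ reveals $v_j$ iff $k_j$ carries constant attention mass. That holds when $e^{bR^2}\gtrsim N e^{aR^2}$, i.e.\ $N\approx e^{(b-a)R^2}$. Any summary succeeding with probability $2/3$ then recovers a constant fraction of $N$ random bits, so it must retain $\Omega(N)$ tokens. This yields the $e^{(b-a)R^2}$ bound in both regimes.

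For the $e^{\frac{1+b}2R^2}$ bound in the importance regime, we make the instance resist reweighting by using keys of two norms. The optimal choice sets the background cosines and norms so that the query's admissible direction splits as $q=\sqrt{(1+b)/2}\,(\cdot)$. I would parametrize $q_j=\alpha k_j/\|k_j\|+\beta u$ with $u$ a common ``cone'' direction, which is the source of the anisotropy. Optimizing $\alpha,\beta$ under the constraints $\cos\in[a,b]$ for all keys should produce exponent $\frac{1+b}2$.

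\textbf{Main obstacle.} I expect the main obstacle to be this last optimization, together with the matching sensitivity bound in the upper bound. The tight constant $\frac{1+b}{2}$ must emerge on both sides simultaneously. I would first work out the two-dimensional geometry of $\mathrm{span}(k_j,u)$, then check that the regime boundary $b-a=\frac{1+b}2$ is where the two constructions cross.
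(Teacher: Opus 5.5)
Your uniform-regime upper bound is correct and is the paper's argument. Both lower bounds and the importance-regime upper bound have genuine gaps.

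\textbf{Lower bounds.} With keys in near-orthogonal directions, no query can have cosine $\approx a<0$ with all non-matching keys. Suppose a unit vector $w$ had $\ip{w}{\hat k_i}\le a+o(1)$ for all $i\ne j$. Then the mean $\bar u$ of these $N-1$ unit directions would satisfy $\|\bar u\|\ge|a|-o(1)$. But pairwise near-orthogonality gives $\|\bar u\|^2\le\frac{1}{N-1}+\xi=o(1)$. So your instance only yields a gap of about $bR^2$, not $(b-a)R^2$. The missing idea is that keys must share a common direction that queries point away from. The paper sets $k_i=r_k(\sqrt{\alpha}\,u_i+\sqrt{\beta}\,e_0+\sqrt{1-\alpha-\beta}\,e_K)$ and $q_i=r_q(\sqrt{\alpha}\,u_i-\sqrt{\beta}\,e_0+\sqrt{1-\alpha-\beta}\,e_Q)$. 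Matching pairs then have scaled inner product $(\alpha-\beta)R^2=bR^2$, and non-matching pairs have $(\alpha\ip{u_j}{u_i}-\beta)R^2\in[aR^2,(a+2\alpha\xi)R^2]$. The same single-norm construction also gives the $\frac{1+b}{2}$ term, with no second key norm. Take $\alpha=\min\{\frac{1+b}{2},\frac{b-a}{1+\xi}\}$ and $\beta=\alpha-b$ of either sign, using $\sqrt{|\beta|}$ and $\operatorname{sgn}(\beta)$ on the query's $e_0$ term. The norm constraint $\alpha+|\beta|\le1$ is exactly what caps the gap $\alpha(1-\xi)R^2$ at $\frac{1+b}{2}R^2$; this is the optimization you left open.

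Separately, ``recovers $\Omega(N)$ random bits, so it retains $\Omega(N)$ tokens'' does not follow in this model. Retained tokens carry arbitrary real weights, so a summary is not a short bit string. With values $\pm e_1$, a convex combination of other retained values can sit on $v_j$, so nothing forces keeping $k_j$. The paper instead uses near-orthogonal unit values with $|\ip{v_i}{v_j}|\le\frac18$. A summary missing group $i$ then outputs a point with inner product at most $\frac18$ with $v_i$, while $\ip{v_i}{\Attn(K,V,q_i)}\ge\frac78$. So it fails at $q_i$ outright, and $\E|I|\ge\sum_i\Pr[H_i]\ge2m/3$.

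\textbf{Upper bounds.} The step $s_i\ge a\rho_i\ge aR^2$ (and $s_i\le bR^2$) needs $a\le0\le b$. So uniform sampling does not give $e^{(b-a)R^2}$ throughout the importance regime. For example, let $a=0.2$, $b=0.5$, and take one key of norm $r_k$ with value $e_1$ plus $n-1\approx e^{0.5R^2}$ keys of tiny norm with value $-e_1$. At an admissible query with cosine $b$ to the big key, that key carries constant mass. Uniform sampling then needs $\Omega(e^{0.5R^2})$ samples, against a target of $e^{0.3R^2}$. This norm heterogeneity is why the paper bins key norms and query norms into $M=\max\{1,\lceil R^2\rceil\}$ ranges. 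On each convex piece it replaces the cosine constraints by inner-product bounds with $h_{uv}-\ell_{uv}\le(b-a)B_{uv}+2$, and then sums the per-piece bounds geometrically.

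Your sampler $p_i\propto\max_q e^{s_i(q)}$ is also not sensitivity sampling: for equal-norm keys it is uniform. That fails for $a=-1$, $b=1$ with one key $r_ke_1$ (value $e_1$) and $e^{2R^2}$ copies of $-r_ke_1$ (value $-e_1$), where the target is $e^{R^2}$.

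Finally, your sketch of the total-sensitivity bound has no working mechanism. Concentration over random directions is irrelevant for worst-case keys. The paper picks a maximizing query $q_i$ for each key, sets $\lambda_i=\sigma_i/\sum_j\sigma_j$, and applies Jensen to get $\log\sum_i\sigma_i\le\sum_i\lambda_i\ip{k_i-\bar k}{q_i-\bar q}/\sqrt d$. This is at most $h-\ip{\bar k}{\bar q}/\sqrt d$ by the constraints, and at most $R^2+\ip{\bar k}{\bar q}/\sqrt d$ by Cauchy--Schwarz and AM--GM. Averaging the two gives $(R^2+h)/2$, which is the $\frac{1+b}{2}$ exponent when $h=bR^2$.
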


\begin{figure}[htbp]
    \centering
    \begin{minipage}[t]{0.4\textwidth}
        \vspace{2pt}
        \centering
        \includegraphics[width=\linewidth]{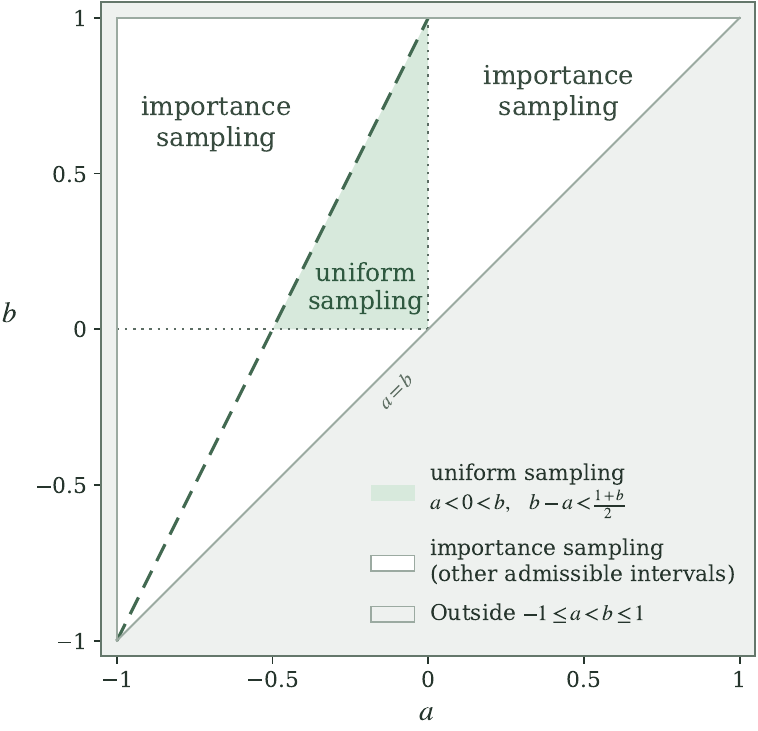}
        \captionsetup{margin={3mm,0pt}, skip=1pt}
        \caption{Uniform sampling and importance sampling regions in parameter space.}
        \label{fig:importance-sampling}
        % \vspace{-5mm}
    \end{minipage}
    \hfill
    \begin{minipage}[t]{0.59\textwidth}
        \vspace{0pt}
        \centering
        \includegraphics[width=\linewidth]{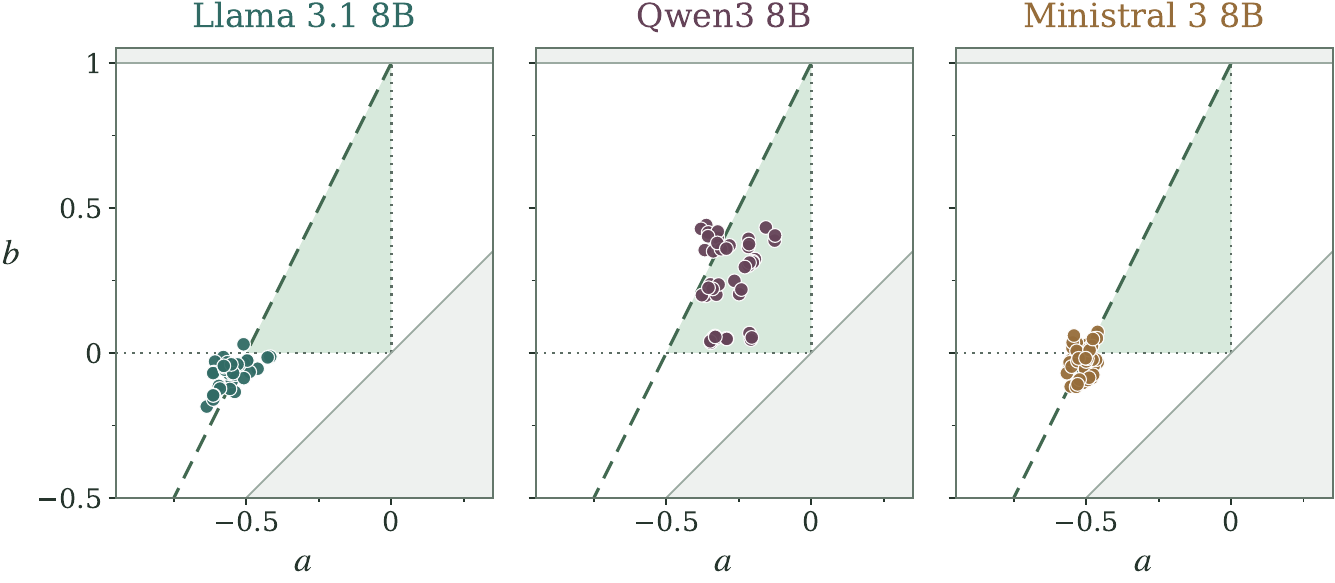}
        \captionsetup{margin={7mm,0pt}}
        \caption{Empirical bounds $a$ and $b$ for 8B models Llama 3.1, Qwen3, and Ministral 3. Each point represents one context (from LongBench v2/InfiniteBench/SCBench)--context-length (8/32/128/256 k)--layer(4/8/16/24) configuration, with endpoints $a$ and $b$ computed as $0.001$ and $0.999$ quantiles of cosine similarities distribution. Initial-token (attention sink) is excluded. See \Cref{sec:qk_experiments} for more plots.}
        \label{fig:ab_real}
        % \vspace{-5mm}
    \end{minipage}
\end{figure}

% \begin{wrapfigure}{l}{0.45\textwidth}    \centering
%     \begin{minipage}[t]{0.44\textwidth}
%             \vspace{2pt}
%             \centering
%             \includegraphics[width=\linewidth]{img/ab_optimal_exponent.pdf}
%     \caption{Optimal exponent guaranteed by \Cref{thm:attention} ($\min\{(1 + b)/2, (b - a)\}$). For details on real dataset configurations see \Cref{fig:ab_real}.}
%     \end{minipage}
% \end{wrapfigure}
% \end{comment}
 
%\paragraph{Bounded cosine similarities.} The distribution of cosine similarities of key/query pairs from the transformer architecture has received a lot of attention in the literature (see, e.g.~\cite{mazare2025saap}). It has been empirically established that, once the attention sink, namely the  first token, is removed, the rest of the correlations are small, justifying our assumption. See also \Cref{sec:qk_experiments} for our experiments on correlations and attention sinks. 

%\paragraph{Uniform sampling in theory and practice.}
Theorem~\ref{thm:attention} is achieved by uniform sampling when the key/query cosine similarities are small ($a < 0 < b$ and $b - a \leq \frac{1+b}{2}$), and by an importance sampling scheme for larger cosine similarities. Interestingly, even though uniform sampling is by no means the best token eviction policy, it is quite competitive with some earlier works on token eviction for KV cache compression, especially query-independent ones. See, for example, Table 1 in ~\cite{kochetkova2025balancekv} for a comparison of SnapKV and BalanceKV with uniform sampling. The recent work of~\cite{wang2026randomattention} provides more evidence for this phenomenon. Figure~\ref{fig:importance-sampling} helps explain this from the perspective of theory: the empirical $(a,b)$ values for widely used models lie in or close to the uniform sampling regime.

In addition to the theorem statement, the {\em proof} of the theorem provides insights into another empirical phenomenon. 
Specifically, the anisotropy of the key/query space that has been empirically observed in the literature~\cite{GodeyCS24} appears naturally in our lower bound instance. In our hard input distribution on key and query vectors, the keys and queries are similarly distributed except that the keys are positively shifted along a specific direction $e_0$ while the queries are negatively shifted along $e_0$. This construction, which is critical to establishing a matching space lower bound, is indeed anisotropic: the keys have a large positive projection on a common direction, and the queries have a large negative projection on the same direction, similarly to distributions observed in practice (\Cref{tab:qk_mean_cosine_matched}, \Cref{fig:lb-real}). See Section~\ref{sec:lower} for more details.

\begin{table}[ht]
\centering
\caption{Cosine similarity between mean query and key vectors.
Values are means $\pm$ standard deviations across example/context-length
configurations common to all models and layers.}
\label{tab:qk_mean_cosine_matched}
\renewcommand{\arraystretch}{1.1}
\small
\begin{tabular}{@{}lrrrr@{}}
\toprule
Model & Layer 4 & Layer 8 & Layer 16 & Layer 24 \\
\midrule
Llama 3.1 8B
& $-0.584 \pm 0.148$
& $-0.725 \pm 0.098$
& $-0.681 \pm 0.076$
& $-0.688 \pm 0.105$ \\
Qwen3 8B
& $0.068 \pm 0.037$
& $-0.146 \pm 0.034$
& $0.024 \pm 0.044$
& $-0.143 \pm 0.080$ \\
Ministral 3 8B
& $-0.724 \pm 0.047$
& $-0.628 \pm 0.086$
& $-0.613 \pm 0.078$
& $-0.658 \pm 0.045$ \\
\bottomrule
\end{tabular}
\vspace{-3mm}
\end{table}

% \WFclear

% \paragraph{Related work on worst case space complexity of attention approximation.}  Prior work~\citep{HarisO25,libertyak26,chen2026tight} has established nearly asymptotically right bounds on the space complexity of approximating attention in the worst case. The algorithmic landscape is quite diverse: a combination of sketching and random sampling (or, discrepancy) approaches is needed to achieve the bounds. A comparison of the bounds is given in Table~\ref{tab:related_work} below. We note that the worst case setting is crucially different from our setup in that queries can be arbitrarily well aligned with key.

\subsection{Related work}

\paragraph{Related work on worst-case space complexity of attention approximation.}
Prior work~\citep{HarisO25,libertyak26,chen2026tight} has established nearly tight asymptotic bounds on the space complexity of approximating attention in the worst case. The algorithmic landscape is diverse, with approaches drawing on sketching, random sampling, discrepancy theory, and kernel-density estimation. Provable attention approximation has also led to implemented KV cache compression methods. SubGen~\citep{zandieh2024subgen} combines online key clustering with $\ell_2$ sampling of values to obtain approximation guarantees that exploit clustering structure, and evaluates its implementation on long-context question-answering tasks. BalanceKV~\citep{kochetkova2025balancekv} develops a streaming compression algorithm based on geometric discrepancy and vector balancing, with both theoretical guarantees and empirical evaluations of attention approximation and end-to-end generation on long-context benchmarks. A comparison is given in Table~\ref{tab:related_work}. We note that the worst case setting is crucially different from our setup in that queries can be arbitrarily well aligned with key.
\paragraph{Related work on KV cache compression.} We mention the lines of the most relevant work.

\noindent{\em KV subset selection.} A prominent approach to KV-cache compression retains a subset of the
original key-value pairs. H$_2$O combines accumulated-attention heavy hitters with recent
tokens~\citep{zhang2023h2o}, while Scissorhands uses token importance
that persists across decoding steps~\citep{liu2023scissorhands}.
StreamingLLM keeps initial attention sinks and a sliding window of
recent tokens~\citep{xiao2024streamingllm}. SnapKV selects KV positions
per head before generation using a window at the prompt's
end~\citep{li2024snapkv}. PyramidKV splits a fixed cache budget across
layers~\citep{cai2025pyramidkv}, while Ada-KV splits it across heads
using an attention-output error bound~\citep{feng2025adakv}. These approaches exploit historical attention,
prompt observations, or layer/head structure, rather than establishing
optimal retained-token complexity under an explicit restriction on
future key-query similarities.

\noindent{\em Compressed KV representations.}
Another line of work reduces storage per cached vector.
KIVI quantizes keys per channel and values per
token~\citep{liu2024kivi}. KVQuant combines distribution-aware
quantization, pre-RoPE key quantization, and separate outlier
handling~\citep{hooper2024kvquant}. TurboQuant uses random rotations
for data-oblivious vector quantization, with guarantees on mean-squared
and inner-product distortion~\citep{zandieh2026turboquant}.
Palu caches low-dimensional intermediate representations from
low-rank factorizations of key/value projections~\citep{chang2025palu}.
Cartridges learns context representations through
self-study~\citep{eyuboglu2025cartridges}. These representations fall
outside our weighted-subset model, so our retained-token lower bounds
do not apply to arbitrary quantized or low-rank encodings.

\noindent{\em Random KV eviction.} \citet{wang2026randomattention} propose a method that reserves cache space for the input prompt and the most recent tokens, then fills the remaining slots by uniform random sampling, independently across KV heads at each eviction event. Their method is competitive with importance-based policies on reasoning benchmarks, providing further evidence that random sampling can be effective in practice.
 
\begin{comment}
\begin{table}[ht]
\captionsetup{skip=1pt}
\caption{Worst-case space complexity of attention. $\widetilde{O}, \widetilde{\Omega}$ suppress  $\text{poly}(\frac1{\eps}\log n)$ factors}
\label{tab:related_work}
\centering
\small
\setlength{\tabcolsep}{4pt}
\renewcommand{\arraystretch}{1.3}
\begin{tabularx}{\textwidth}{@{}l X c c@{}}
\toprule
Reference & Model / regime & Upper bound & Lower bound \\
\midrule
\cite{kochetkova2025balancekv}
& Coreset size (tokens)
& $\widetilde{O}\!\left(\sqrt{d}e^{2R^2}\right)$
& $\widetilde{\Omega}\!\left(1\right)$
\\[3pt]
\cite{HarisO25}
& Space complexity (bits) for entrywise approx
& $\widetilde O(nd)$
& ${\Omega}(nd)$
\\[3pt]
\cite{libertyak26}
& Coreset size (tokens)
& $\widetilde{O}\!\left(\sqrt{d}e^{R^2 + \log R + o(\log \log R)}\right)$
& $\widetilde{\Omega}\!\left(\sqrt{d}e^{R^2}\right)$
\\[3pt]
\cite{chen2026tight}
& Space complexity (bits)
& $\widetilde{O}\!\left({e^{(1+o(1))R^2}}\right)$
& $\widetilde{\Omega}\!\left({e^{(1-o(1))R^2}}\right)$
\\[3pt]
\textbf{This work}, $b - a < \frac{1 + b}{2}$
& Coreset size (tokens)
& $\widetilde{O}\!\left(e^{(b - a)R^2}\right)$
& $\widetilde{\Omega}\!\left(e^{(b - a - o(1))R^2}\right)$
\\[3pt]
\textbf{This work}, $b - a > \frac{1 + b}{2}$
& Coreset size (tokens)
& $\widetilde{O}\!\left(e^{\frac{1 + b}{2}R^2}\right)$
& $\widetilde{\Omega}\!\left(e^{(\frac{1 + b}{2} - o(1))R^2}\right)$
\\
\bottomrule
\end{tabularx}
\end{table}
\end{comment}

\begin{table}[ht]
\captionsetup{skip=1pt}
\caption{Worst-case space complexity of attention.
$\widetilde{O},\widetilde{\Omega}$ suppress
$\operatorname{poly}(\eps^{-1}\log n)$ factors.}
\label{tab:related_work}
\centering
\small
\setlength{\tabcolsep}{4pt}
\renewcommand{\arraystretch}{1.3}
\begin{tabularx}{\textwidth}{@{}l X c c@{}}
\toprule
Reference & Model / regime & Upper bound & Lower bound \\
\midrule
\cite{kochetkova2025balancekv}
& Coreset size (tokens)
& $\widetilde{O}\!\left(\sqrt{d}e^{2R^2}\right)$
& $\widetilde{\Omega}\!\left(1\right)$
\\[3pt]
\cite{HarisO25}\textsuperscript{a}
& Space complexity (bits)
& $\widetilde{O}(nd)$
& $\Omega(nd)$
\\[3pt]
\cite{libertyak26}\textsuperscript{b}
& Coreset size (tokens)
& $\widetilde{O}\!\left(
    \sqrt{d}e^{R^2+\log R+o(\log\log R)}
  \right)$
& $\widetilde{\Omega}\!\left(\sqrt{d}e^{R^2}\right)$
\\[3pt]
\cite{chen2026tight}\textsuperscript{c}
& Space complexity (bits)
& $\widetilde{O}\!\left(e^{(1+o(1))R^2}\right)$
& $\widetilde{\Omega}\!\left(e^{(1-o(1))R^2}\right)$
\\[3pt]
\midrule
\textbf{This work}\textsuperscript{d},
$b-a\leq \frac{1+b}{2}$
& Coreset size (tokens)
& $\widetilde{O}\!\left(e^{(b-a)R^2}\right)$
& $\widetilde{\Omega}\!\left(e^{(b-a-o(1))R^2}\right)$
\\[3pt]
\textbf{This work}\textsuperscript{d},
$b-a>\frac{1+b}{2}$
& Coreset size (tokens)
& $\widetilde{O}\!\left(e^{\frac{1+b}{2}R^2}\right)$
& $\widetilde{\Omega}\!\left(
    e^{(\frac{1+b}{2}-o(1))R^2}
  \right)$
\\
\bottomrule
\end{tabularx}

\vspace{3pt}
\begin{minipage}{\textwidth}
\footnotesize
\raggedright

\textsuperscript{a}
For entry-wise approximation guarantee.
% \par\smallskip

\textsuperscript{b}
The coreset guarantee holds simultaneously for all queries
in the prescribed norm ball.
% \par\smallskip

\textsuperscript{c}
Assuming $d=\operatorname{polylog}(n)$
and $R^2=\Omega(\log n)$.
% \par\smallskip

\textsuperscript{d}
\Cref{fig:ab_real} and \Cref{fig:cos_sim_gridplot} suggest that our bounds are $\tilde{O}(e^{\approx 0.6 R^2})$ for most practical configurations of $a$ and $b$.
\vspace{-5mm}
\end{minipage}
\end{table}

\section{Uniform sampling regime}
\subsection{Upper Bound}\label{sec:upper-uniform}

% \ying{This subsection has been rewritten in full. Though one thing we could discuss is whether we want to put the sampling procedure into an algorithm box?}\eee{I think it's fine as is, but it's not a strong preference}
The upper bound in the small cosine similarities regime is shown by a simple approach: sampling tokens uniformly.

Consider the small cosine similarities regime of
\cref{def:small_cos_sim}, where $-1 \leq a < 0 < b\le 1$ and $b - a \leq \frac{1+b}{2}$.
Our assumptions are $\|k_i\|\le r_k$, $\|q\|\le r_q$, and
\[
    a
    \le\frac{\ip{k_i}{q}}{\|k_i\|\|q\|}
    \le b.
\]
We write $R^2=r_kr_q/\sqrt d$. Set $m=\left\lceil\frac{96e^{(b-a)R^2}}{\eps^2}\right\rceil.$
If $m\ge n$, retain the full dataset. Otherwise, sample $m$
tokens independently and uniformly with replacement.
Writing $J_1,\ldots,J_m$ for their indices, return the estimate
\[
    \widehat{\Attn}(K,V,q) = \frac{\sum_{\ell=1}^m
        e^{\ip{k_{J_\ell}}{q}/\sqrt d}v_{J_\ell}}
         {\sum_{\ell=1}^m
        e^{\ip{k_{J_\ell}}{q}/\sqrt d}}.
\]
The intuition behind uniform sampling is that no individual token can account for too large a fraction of the attention mass. Indeed,  for every $i$ and every admissible query $q$, \begin{equation}\label{eq:bound-kernel} e^{\ip{k_i}{q}/\sqrt d}\in[e^{aR^2},e^{bR^2}]. \end{equation} Thus, the attention denominator is at least $ne^{aR^2}$, whereas the contribution of any individual token is at most $e^{bR^2}$. Consequently, each token accounts for at most an $e^{(b-a)R^2}/n$ fraction of the attention mass.

\begin{restatable}[Upper bound, \Cref{item:uniform}]{lemma}{upperuniform}
\label{lem:upper-uniform} The uniform sampling procedure described above retains at most 
\[O_{\eps}\left(
        \min\left\{
            n,\,
            e^{(b-a)R^2}
        \right\}
    \right)\]
tokens and gives an additive-$\eps$ approximation to attention with probability at least 2/3 for each fixed admissible
query.
\end{restatable}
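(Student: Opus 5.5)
The plan is to treat numerator and denominator separately as empirical means of bounded i.i.d. quantities, apply Chebyshev to each, and combine them with a ratio perturbation bound. The space bound is immediate: either all $n\le m$ tokens are kept, or $m=O(e^{(b-a)R^2}/\eps^2)$ are sampled. Since $\eps$ is a constant, this is $O_\eps(\min\{n,e^{(b-a)R^2}\})$. When $m\ge n$ the estimate is exact, so assume $m<n$. Fix an admissible $q$ and write $w_i=e^{\ip{k_i}{q}/\sqrt d}$. Set
\[
Z=\sum_{i=1}^n w_i,\qquad N=\sum_{i=1}^n w_iv_i,\qquad \widehat Z=\frac nm\sum_{\ell=1}^m w_{J_\ell},\qquad \widehat N=\frac nm\sum_{\ell=1}^m w_{J_\ell}v_{J_\ell}.
\]
Then $\widehat{\Attn}=\widehat N/\widehat Z$, because the factor $n/m$ cancels. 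Both estimators are unbiased: $\E\widehat Z=Z$ and $\E\widehat N=N$.

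The key input is \cref{eq:bound-kernel}. Because $\|k_i\|\le r_k$ and $\|q\|\le r_q$, we have $\ip{k_i}{q}/\sqrt d=\cos\theta_i\,\|k_i\|\|q\|/\sqrt d$. With $a<0<b$, this lies in $[aR^2,bR^2]$, so $w_i\in[e^{aR^2},e^{bR^2}]$. The sign conditions are used here: scaling by norms below the maximum only shrinks the value toward $0$, which stays inside the interval. It follows that $Z\ge ne^{aR^2}$.

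For the variances, a single sample $X=nw_J$ satisfies $\E X^2\le n\max_i w_i\cdot\sum_i w_i\le ne^{bR^2}Z$. Hence
\[
\Var(\widehat Z)\le\frac{ne^{bR^2}Z}{m}\le\frac{e^{(b-a)R^2}Z^2}{m},
\]
using $n\le Z e^{-aR^2}$. The same argument applied to $\E\|nw_Jv_J\|^2$, using $\|v_i\|\le1$, gives $\E\|\widehat N-N\|^2\le e^{(b-a)R^2}Z^2/m$.

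With $m\ge 96e^{(b-a)R^2}/\eps^2$, Chebyshev and Markov bound each failure probability by $1/6$:
\[
\Pr\bigl[|\widehat Z-Z|>\tfrac{\eps}{4}Z\bigr]\le\frac{16}{96}=\frac16,\qquad \Pr\bigl[\|\widehat N-N\|>\tfrac{\eps}{4}Z\bigr]\le\frac16.
\]
Since $\eps/4$ is at least the deviation needed below, a smaller $\eps/4$-type threshold is also fine. A union bound gives success with probability at least $2/3$.

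The final step is the ratio bound:
\[
\Bigl\|\frac{\widehat N}{\widehat Z}-\frac NZ\Bigr\|\le\frac{\|\widehat N-N\|}{\widehat Z}+\frac{\|N\|}{Z}\cdot\frac{|Z-\widehat Z|}{\widehat Z},
\]
where $\|N\|\le Z$. On the success event $\widehat Z\ge(1-\eps/4)Z\ge\tfrac78Z$, so the error is at most $\tfrac{8}{7}(\eps/4+\eps/4)<\eps$. The constant $96$ leaves ample slack.

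The only delicate point is the variance bound, specifically converting $n$ into $Z e^{-aR^2}$. This is what produces the exponent $b-a$, and it is the step where the cosine constraint together with $a<0<b$ enters. The rest is routine.
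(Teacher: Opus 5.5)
Your proof is correct and uses essentially the same second-moment argument as the paper. The key step $n\le Ze^{-aR^2}$ (equivalently $D\ge e^{aR^2}$) turns the $e^{bR^2}$ variance bound into $e^{(b-a)R^2}$, and you then apply Chebyshev to the denominator and Markov to the squared numerator deviation. The only cosmetic difference is that the paper centers the numerator at $\Attn(K,V,q)$, so the error is exactly that numerator divided by $\widehat D$ and no ratio-perturbation step is needed; your uncentered version is precisely the paper's proof of \Cref{lem:upper-sampling} specialized to $\pi_i=1/n$.
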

We defer the full proof of \Cref{lem:upper-uniform} to \Cref{sec:up_bound_uniform_appendix}. We use Chebyshev's inequality to bound the attention denominator, and use Markov's inequality to bound the numerator of the error.

% A standard second-moment argument, using Chebyshev's inequality
% for the denominator and Markov's inequality for the squared norm
% of the numerator error, shows that this estimate approximates
% attention within additive error $\eps$ with probability at least
% $2/3$ for each fixed admissible query.
% The number of retained tokens is at most
% \[
%     \min\left\{n,
%         \left\lceil\frac{96e^{(b-a)R}}{\eps^2}\right\rceil
%     \right\},
% \]
% giving the claimed exponent $b-a$ in the small cosine
% similarities regime.
\subsection{Lower Bound}\label{sec:lower}

Recall that in our model \eqref{eq:attention-model}, the
token-eviction algorithm (randomly) selects tokens and assigns them
nonnegative weights before seeing the query. We will prove lower bounds against such algorithms.
% \eee{for the theorem name below: I've been naming the name results as "Upper/lower bound, \Cref{item:uniform}/\Cref{item:importance}". I don't have a strong opinion: if you'd prefer to write "uniform/importance sampling regime" instead, let's change throughout}\justin{on further reflection, I like Item 1/2 more}
\begin{theorem}[Lower bound, \Cref{item:uniform}]\label{thm:lb-attn}
Consider $r_k,r_q>0$, $\eps\in(0,1/2]$, and $-1\le a<b\le1$ in the uniform sampling regime (\cref{def:small_cos_sim}).
Consider $n$ and $d = \omega(\log n)$, and let $\xi = \sqrt{\frac{6\log(2n)}{d-3}} = o(1)$.
% \boris{give this regime a small alias which is not "uniform sampling regime" and use it here}\eee{why? can't we just say (see \Cref{def:small_cos_sim})?}
Pick any integer $m$ satisfying
\begin{equation}\label{eq:finite-conditions}
    2\le m\le n,\qquad m\le \mainterm{e^{(b-a)R^2}}\secondary{\cdot e^{-2\xi R^2}/32}.
\end{equation}
There is a fixed dataset $(K,V)$ of $n$ key-value pairs, with keys of norm $r_k$ and unit-norm values in $\R^d$, together with queries $q_1,\ldots,q_m$ of norm $r_q$, such that every key $k\in K$ satisfies
\[
    a\le\frac{\ip{k}{q_i}}{\|k\| \|q\|}\le b
    \qquad\text{for all }i\in[m].
\]
For this dataset, satisfying~\Cref{eq:attention-guarantee} at each
of these queries requires $\OPTattn\ge2m/3$.

\end{theorem}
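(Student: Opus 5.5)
We construct a dataset in which $m$ groups of keys each "belong" to one query, and then argue by a counting/averaging argument. Fix an orthonormal direction $e_0$ and a $(d-1)$-dimensional orthogonal complement. Split the $n$ tokens into $m$ blocks $B_1,\dots,B_m$ of size $\approx n/m$. Draw $m$ random unit vectors $u_1,\dots,u_m$ in the complement. For each block $j$, place its keys at $k = r_k(\alpha e_0 + \beta u_j)$ and its queries at $q_j = r_q(-\gamma e_0 + \delta u_j)$, with $\alpha^2+\beta^2=\gamma^2+\delta^2=1$. This is the anisotropic shift mentioned in the text: keys are shifted positively and queries negatively along $e_0$.

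Assign values as follows. Split each block into two halves; half gets value $+w$ and half gets $-w$ for a unit vector $w\perp$ everything, or use block-specific random signs. Then the attention at $q_j$, which concentrates on block $B_j$, has a nontrivial value only if the estimator retains enough tokens from $B_j$. Cosine similarities are $\cos(k,q_j)=-\alpha\gamma+\beta\delta$ for keys in the matching block and $-\alpha\gamma+\beta\delta\langle u_i,u_j\rangle$ otherwise. By concentration of random unit vectors, $|\langle u_i,u_j\rangle|\le \xi$ for all pairs with high probability (here $\xi=\sqrt{6\log(2n)/(d-3)}$). So we choose $-\alpha\gamma+\beta\delta=b$ (minus a $\xi$ slack) and $-\alpha\gamma\pm\beta\delta\xi\ge a$. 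The natural choice in the uniform regime is $\alpha\gamma\approx -a$, $\beta\delta\approx b-a$. This is feasible exactly when $b-a\le(1+b)/2$: maximizing $\beta\delta-\alpha\gamma$ subject to $\beta\delta-\alpha\gamma=b$ and $\alpha\gamma=-a$ needs $\beta\delta=b-a$ with $\alpha=\gamma$, so $\beta\delta=1-\alpha^2=1+a\ge b-a$. This is the regime condition.

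Next, compute the attention. Logits are $bR^2$ in the matching block and at most $(a+O(\xi))R^2$ elsewhere. The matching block's mass is $(n/m)e^{bR^2}$ versus at most $n e^{(a+\xi)R^2}$ from the rest. With $m\le e^{(b-a)R^2}e^{-2\xi R^2}/32$, the matching block dominates by a factor $\ge 32$, so $\Attn(K,V,q_j)$ is within a small constant of the block-$j$ value average.

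To force retention, I would make the values within each block identical, $v=w_j$, with the $w_j$ orthonormal; $m\le d$ can be arranged since in the relevant regime $m\le n$, and if needed we use random signs instead. Then $\Attn(q_j)\approx w_j$ with error $\le 1/16$. If the summary retains no token from $B_j$, then $\widehat{\Attn}(q_j)$ is a convex combination of $w_i$ with $i\ne j$, which is at distance $\ge 1-1/16>\eps$ from $w_j$ for $\eps\le 1/2$ (the distance from $w_j$ to that hull is at least $1$). So success at $q_j$ requires hitting $B_j$. Summing, $\sum_j\Pr[I\cap B_j\ne\emptyset]\ge 2m/3$, hence $\E|I|\ge 2m/3$ and $\OPTattn\ge 2m/3$.

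The main obstacle is the exact parameter bookkeeping with the $\xi$ slack. We must choose $\alpha,\beta,\gamma,\delta$ so that all cosines lie in $[a,b]$ even with perturbations $\pm\xi$, while the gap in exponents remains $(b-a-2\xi)R^2$. I would first try shrinking the target matching similarity to $b$ and the cross-block similarity target to $a+\xi$, and check that the $\log 32$ constant absorbs the remaining factor.
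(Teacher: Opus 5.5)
Your overall strategy is the paper's. You use $m$ groups of identical key--value copies, shift keys and queries in opposite directions along $e_0$, take nearly orthogonal group directions $u_j$, make each $q_j$ concentrate on its own group, and count $\E|I|\ge\sum_j\Pr[I\cap B_j\ne\emptyset]\ge 2m/3$. Your proposed slack also works. Taking $\alpha\gamma=-a-\xi$ and $\beta\delta=b-a-\xi$ makes the matching cosine exactly $b$ and keeps every cross cosine in $[a,\,a+2\xi]$. The resulting logit gap is $(b-a-2\xi)R^2$. Combined with $n/\lfloor n/m\rfloor\le 2m$, this gives $1-\alpha_j\le 1/16$ and $\|\Attn(K,V,q_j)-w_j\|\le 1/8$ under the stated bound on $m$ (factor $16$, not $32$, because of rounding). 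However, two steps fail as written.

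\textbf{The feasibility argument for your unpadded geometry is wrong.} With $\alpha=\gamma$ and $\beta=\delta$, unit norm forces $\beta\delta=1-\alpha\gamma=1+a$. The matching cosine is then $1+2a$, which strictly exceeds $b$ everywhere except on the boundary $b-2a=1$. So you cannot have $\beta\delta=b-a$ and $\alpha=\gamma$ simultaneously. An asymmetric choice does exist. Write $(\alpha,\beta)=(\cos\theta,\sin\theta)$ and $(\gamma,\delta)=(\cos\phi,\sin\phi)$. You need $\cos(\theta+\phi)=\alpha\gamma-\beta\delta=-b$ and $\cos(\theta-\phi)=\alpha\gamma+\beta\delta=b-2a-2\xi$, which is solvable because $b-2a\le 1$. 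The paper avoids this by keeping symmetric amplitudes $\sqrt{\alpha}\,u_i\pm\sqrt{\beta}\,e_0$ and placing the leftover norm on two \emph{different} orthogonal directions, $e_K$ for keys and $e_Q$ for queries. These directions contribute nothing to key--query inner products, so the norm condition becomes simply $\alpha+\beta\le 1$, i.e.\ again $b-2a\le 1$.

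\textbf{Orthonormal values require $m\le d$, and this does not follow from $m\le n$.} The theorem allows $m$ up to $\min\{n,e^{(b-a)R^2}e^{-2\xi R^2}/32\}$, while only $d=\omega(\log n)$ is assumed. You therefore need your random-sign fallback: nearly orthogonal unit values with $|\ip{w_i}{w_j}|\le 1/8$, which exist since $d=\omega(\log n)$. With these, your claim that $w_j$ is at distance at least $1$ from the convex hull of the other $w_i$ is no longer true. Replace it by projecting onto $w_j$, as the paper does. Any convex combination $x$ of $\{w_i\}_{i\ne j}$ has $\ip{w_j}{x}\le 1/8$, while $\ip{w_j}{\Attn(K,V,q_j)}\ge 7/8$. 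Hence the error is at least $3/4>\eps$ whenever no copy from $B_j$ is retained.
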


Our lower bound constructs $m$ groups of identical key-value pairs, each paired with a different query.
A visualization of the construction is given in \cref{fig:lb-construction}.
For each group, we build the key and corresponding query from a set of nearly orthogonal vectors $u_1, \ldots, u_m$ (\cref{lem:code}):
\begin{restatable}[Nearly orthogonal vectors]{lemma}{code}\label{lem:code}
Consider integer $M \geq 2$ and $\xi \in (0,1)$. Let $D$ be an integer such that $D \geq 6\log(2M)/\xi^2$.
There exist $M$ unit vectors
$u_1,\ldots,u_M\in\R^D$ such that
$|\ip{u_i}{u_j}|\le\xi$ for every $i\ne j$. 
\end{restatable}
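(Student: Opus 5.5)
We will prove \Cref{lem:code} by the probabilistic method, choosing the vectors uniformly at random and applying a union bound over pairs. Concretely, let $g_1,\ldots,g_M$ be independent uniformly random points on the unit sphere $S^{D-1}\subset\R^D$; equivalently, normalize independent standard Gaussians $g_i/\|g_i\|$. We will show that with positive probability $|\ip{u_i}{u_j}|\le\xi$ simultaneously for all $\binom{M}{2}$ pairs $i\ne j$. Existence then follows.

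The key step is a concentration bound for a single pair. By rotational invariance, for fixed $i\ne j$ the inner product $\ip{u_i}{u_j}$ has the same law as the first coordinate $x_1$ of a uniform point on $S^{D-1}$. We will use the standard spherical cap bound
\[
\Pr\bigl[|x_1|>\xi\bigr]\le 2e^{-D\xi^2/2}.
\]
One route is the exact cap density proportional to $(1-t^2)^{(D-3)/2}$ together with $1-t^2\le e^{-t^2}$. Another is Lévy's isoperimetric inequality, $\Pr[x_1>\xi]\le e^{-D\xi^2/2}$ for the Lipschitz function $x\mapsto x_1$.

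If we prefer to avoid sphere geometry, we can instead take random sign vectors $u_i\in\{\pm1/\sqrt D\}^D$. Then $\ip{u_i}{u_j}$ is an average of $D$ independent Rademacher variables, and Hoeffding's inequality gives exactly $\Pr[|\ip{u_i}{u_j}|>\xi]\le 2e^{-D\xi^2/2}$. This version is fully elementary, so it is the one we will use.

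Finally, we take a union bound over fewer than $M^2/2$ pairs. The failure probability is at most
\[
M^2 e^{-D\xi^2/2}\le M^2 e^{-3\log(2M)}=\frac{M^2}{8M^3}<1,
\]
using $D\ge 6\log(2M)/\xi^2$. Hence some choice of the vectors satisfies all the constraints. The only point needing care is the single-pair tail bound with the correct constant in the exponent. The hypothesis on $D$ leaves ample slack, since even $e^{-D\xi^2/3}$ would still give a failure probability below $M^2/(2M)^2<1$. So we expect no real obstacle, only a check that the chosen concentration inequality (Hoeffding for sign vectors) is stated with the constant used above.
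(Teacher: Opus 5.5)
Your proposal is correct and matches the paper's proof: random sign vectors in $\{\pm D^{-1/2}\}^D$, Hoeffding's inequality giving $\Pr[|\ip{u_i}{u_j}|>\xi]\le 2e^{-D\xi^2/2}$ for each pair, and a union bound giving failure probability at most $M^2(2M)^{-3}=1/(8M)<1$. The spherical-cap alternative you mention would also work, but, as you note, it is not needed.
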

To see this, choose $u_1,\ldots,u_M$ independently and uniformly from
$\{-D^{-1/2},D^{-1/2}\}^D$, so that each vector has unit norm.
Hoeffding's inequality and a union bound show that the probability
of any pair satisfying $|\ip{u_i}{u_j}|>\xi$ is at most
$M^2e^{-D\xi^2/2}<1$ under the assumed bound on $D$.
We give the full proof in~\Cref{sec:lower_appendix}.

In order to keep all key-query cosine similarities in $[a,b]$, queries and keys are anisotropically shifted away from each other along a fixed direction $e_0$.
A pair of keys and queries in the same group $k_i, q_i$ are (a weighted version of) $u_i \pm e_0$ where the sign on $e_0$ differs for the key and query (along with a normalization term to set their respective norms to $r_k, r_q$).

Each query $q_i$ has similarity $b$ with its own group's key $k_i$, while the near orthogonality caps its similarity close to $a$ with the other groups' keys $k_j$. The exponential in attention amplifies this gap, giving the query's own group almost all the attention mass. We also choose nearly orthogonal value vectors, so a convex combination of other groups' value vectors cannot well-approximate attention. The
algorithm must therefore retain a key-value pair from each group with
constant probability, otherwise it would fail with probability more than $1/3$ on some group's query. 

The resulting construction holds an intriguing connection to patterns in real data. In \cref{fig:lb-real}, we see that queries and keys are separated along a common direction and otherwise hold a somewhat similar distribution though with different scales for queries and keys. These qualities also appear in our hard instance (\cref{fig:lb-construction}).

\subsubsection{Lower bound construction}

\begin{figure}[t]
    \centering
    \begin{minipage}[t]{0.47\textwidth}
        \centering
        \begin{minipage}[c][5cm][b]{\linewidth}
            \centering
            \includegraphics[
                width=\linewidth,
                height=5cm,
                keepaspectratio
            ]{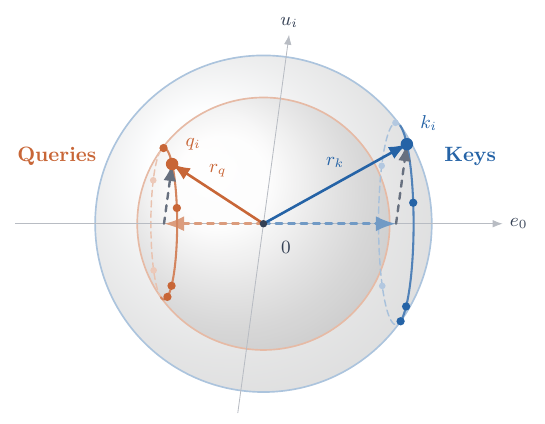}
        \end{minipage}
        \caption{
        A visualization of the hard instance in our lower bound construction. The keys and queries are shifted away from each other along direction $e_0$. Orthogonal to $e_0$, each key-query pair $(k_i, q_i)$ follows its own common direction $u_i$.
        }
        \label{fig:lb-construction}
    \end{minipage}
    \hfill
    \begin{minipage}[t]{0.47\textwidth}
        \centering
        \begin{minipage}[c][5cm][b]{\linewidth}
            \centering
            \includegraphics[
                width=\linewidth,
                height=4.5cm,
                keepaspectratio
            ]{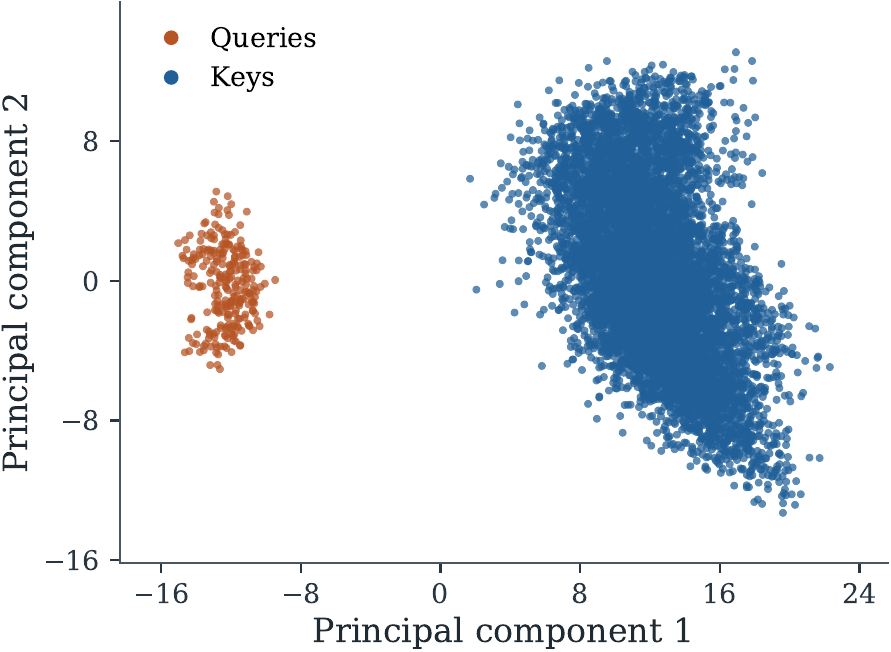}
        \end{minipage}
        \caption{Query-key geometry in Llama 3.1 8B on LongBench v2 example (8K context, layer 8, head 0). A joint PCA is fitted with equal total weight for queries and keys. Initial-token (attention sink) is excluded. See \Cref{sec:qk_experiments} for further plots.} 
        \label{fig:lb-real}
    \end{minipage}
\end{figure}

% \michael{I am a big fan of upper bounding $\varepsilon$ by $1/2$ and not writing $1/(1-\varepsilon)$ anywhere, $\varepsilon$ close to $1$ is not very interesting}

% \justin{todo: connect to main thm by setting $\rho$.}

% \subsubsection{Key, Query, Value Vectors}

% \boris{why $\sqrt{-\mu}?..$ This part needs a main-part friendly exposition}

\paragraph{Goal.}
We construct keys and queries whose scaled inner products are $bR^2$
for matching pairs $(k_i,q_i)$ and $a(1-o(1))R^2$ for nonmatching
pairs $(k_j,q_i)$, where $R^2=r_kr_q/\sqrt d$.
In~\Cref{sub:proof_lb}, we use this gap to make each query concentrate
its attention on copies of its matching key, forcing an accurate
summary to retain many key groups.

\paragraph{Construction.}
Let $e_0,e_K,e_Q$ be the first three standard basis vectors.
Choose $m$ unit vectors satisfying \cref{lem:code} in dimension
$D=d-3$, with pairwise inner products bounded in magnitude by $\xi = \sqrt{\frac{6\log(2n)}{D}}$.
Embed them into the last $d-3$ coordinates of $\R^d$, and denote the
resulting vectors by $u_1,\ldots,u_m$. Define 
\begin{equation}\label{eq:lb-geometry}%\label{eq:geometry}
\begin{aligned}
    k_i&=r_k\bigl(\underbrace{\sqrt{\alpha}\,u_i}_{\text{Shared group direction}}+\underbrace{\sqrt{\beta}\,e_0}_{\text{Anisotropic shift}} + \quad \underbrace{\sqrt{1-\alpha-\beta}\,e_K}_{\text{Norm padding}}\; \bigr), \\
    q_i&=r_q\bigl(\underbrace{\sqrt{\alpha}\,u_i}_{\text{Shared group direction}}-\underbrace{\sqrt{\beta}\,e_0}_{\text{Anisotropic shift}} + \quad \underbrace{\sqrt{1-\alpha-\beta}\,e_Q}_{\text{Norm padding}}\; \bigr),
    % q_i&=r_q\bigl(\sqrt{\alpha}\,u_i
      % -\sqrt{|\beta|}\,e_0\bigr)+ r_q\sqrt{1-\alpha-|\beta|}\,e_Q,
\end{aligned}
\end{equation}
where $\alpha\approx b-a$ and $\beta\approx -a$. Specifically, we set
\begin{equation}\label{eq:geometry-parameters-main}
    \alpha=\frac{b-a}{1+\xi},
    \qquad
    \beta=\frac{-a-\xi b}{1+\xi}.
    % h=bR,\qquad
    % L=\min\left\{\frac{R+h}{2},\frac{(b-a)R}{1+\xi}\right\},
    % \qquad \beta=L-h.
\end{equation}
We have $\alpha\ge0$ since $b\ge a$. Since $a<0$ and $b$ are fixed
and $\xi=o(1)$, for sufficiently large $n$ we have $\xi b<-a$,
so $\beta>0$. Moreover, $\alpha+\beta=2\alpha-b\le b-2a\le1.$
Therefore, $1 - \alpha - \beta \geq 0$ and the keys and queries are well defined with norms exactly equal to $\|k_i\| = r_k$ and $\|q_i\| =r_q$.
The components along $u_i$ and $e_0$ serve the purpose of achieving the desired inner products while the padding along $e_K$ and $e_Q$ increases the norms of the keys/queries to fix the denominator of cosine similarity.

\paragraph{Verifying the inner products.}
For every $i,j\in[m]$, $\frac{\ip{k_j}{q_i}}{\sqrt d}
    =\bigl(\alpha\ip{u_j}{u_i}-\beta\bigr)R^2.$
For matching pairs, $\alpha-\beta=b$ gives  
\begin{equation}\label{eq:matching_cosine}
    \frac{\ip{k_i}{q_i}}{\sqrt d}=bR^2.
\end{equation}
For nonmatching pairs, $|\ip{u_j}{u_i}|\le\xi$ gives
\begin{equation}\label{eq:non-matching_cosine}
    aR^2
    \le \frac{\ip{k_j}{q_i}}{\sqrt d}
    \le (a+2\alpha\xi)R^2
    =aR^2 + \frac{2\xi}{1+\xi}(b-a)R^2.
\end{equation}

\paragraph{Constructing the value vectors.}
Apply \cref{lem:code} a second time to obtain $m$ unit value vectors
$v_1,\ldots,v_m\in\R^d$ satisfying
\begin{equation}\label{eq:lb-value-inner-products}
    |\ip{v_i}{v_j}|\le\frac18
    \qquad\text{for all }i,j\in[m]\text{ with }i\ne j.
\end{equation}
Since $m\le n$, the assumption $d=\omega(\log n)$ ensures that this
is possible.

The final proof of the coreset size lower bound using these key, query and value vectors is given in \cref{sec:lower_appendix}.

\section{Importance sampling regime}
\subsection{Upper Bound}\label{sec:upper_importance}
When the interval $[a,b]$ is wide, the exponential in attention
can make a few tokens much more influential than the rest. To account for this, 
uniform sampling would need to retain many tokens. We thus use {\it importance sampling} instead.
Unlike uniform sampling, which treats all tokens equally,
importance sampling gives higher sampling probabilities to
tokens that can contribute more to attention. This is a
standard idea in randomized linear algebra through leverage-score
sampling~\cite{DMMW12} and in coreset construction through
sensitivity sampling~\cite{BFL21}.

Since we sample before seeing the query, we want to sample a token with probability proportional to the  largest possible attention it could receive.
This quantity is its \emph{sensitivity}: for a nonempty query
set $Q$ containing all admissible queries, define
\begin{equation}\label{eq:sens}
    \sigma_i
    =\sup_{q\in Q}
       \frac{e^{\ip{k_i}{q}/\sqrt d}}
            {\sum_{j=1}^n e^{\ip{k_j}{q}/\sqrt d}}.
\end{equation}
% Their sum $\sum^{n}_{i=1} \sigma_i$ is called the {\it total sensitivity}.

Unfortunately, these sensitivities may be difficult to compute exactly.
Instead, it suffices to use sampling scores $s_i\ge \sigma_i$, i.e., upper bounds of each sensitivities. In the sampling procedure, we sample
each token $i$ with probability $\pi_i=s_i/S$, where
$S=\sum_{i=1}^n s_i$. We set the number of samples proportional to $S$,
so smaller sampling scores  yield a smaller sample size. This leaves two
tasks: compute valid sampling scores  $s_i\ge \sigma_i$ and show that their sum $S$ is
small.

We formalize the sampling procedure in
\Cref{alg:upper-weighted-subset}. For each sampled token \(i\), we retain the corresponding key-value pair \((k_i,v_i)\) together with a positive weight. If we sample the same token
more than once, we keep one copy and add the weights together.

\begin{algorithm}[H]\caption{Attention compression by score-based sampling and reweighting}
\label{alg:upper-weighted-subset}
\begin{algorithmic}[1]
    \Require key-value pairs $(k_1,v_1),\ldots,(k_n,v_n)$,
    accuracy $0<\eps<1$, and sampling scores $s_1,\ldots,s_n>0$
    \State $S\gets\sum_{i=1}^n s_i$, \quad
    $m\gets\lceil 96S/\eps^2\rceil$
    \If{$m\ge n$}
        \State \Return $\{(k_i,v_i,1):i\in[n]\}$
    \EndIf
    \State $\pi_i\gets s_i/S$ for every $i\in[n]$
    \State Subsample $m$ tokens using probabilities $\pi_i$
        \Comment{with replacement}
    \State $c_i\gets$ number of times token $i$ was sampled
    \State \Return $\{(k_i,v_i,c_i/(m\pi_i)):c_i>0\}$
        \Comment{one copy per token}
\end{algorithmic}
\end{algorithm}

We refer to the set returned by \cref{alg:upper-weighted-subset} as the {\it summary}. Given a query $q$, the summary produces an estimator
\begin{equation}\label{eq:upper-attention-estimator}
    \widehat{\Attn}(K,V,q)
    =\frac{\sum_{i\in I}w_i e^{\ip{k_i}{q}/\sqrt d}v_i}
           {\sum_{i\in I}w_i e^{\ip{k_i}{q}/\sqrt d}},
\end{equation}
where $I := \{i: c_i > 0\}$ denotes the set of retained token indices.
The denominator is positive, since the summary is nonempty
and all retained weights are positive.

We first state the approximation guarantee of \cref{alg:upper-weighted-subset}. Using a number of tokens proportional to $S$, the estimator from \cref{{eq:upper-attention-estimator}} approximates the attention with high probability.

\begin{restatable}[Approximation guarantee]{lemma}{samplingguarantee}
\label{lem:upper-sampling}
Suppose $\|v_i\|\le1$ and $s_i\ge\sigma_i$ for every $i\in[n]$.
Then \cref{alg:upper-weighted-subset} returns a weighted attention
summary of size at most
\[
    \min\left\{n,
        \left\lceil\frac{96S}{\eps^2}\right\rceil
    \right\}.
\]
For every fixed query $q\in Q$, its estimate satisfies
\[
    \Pr\!\left[
        \bigl\|\widehat{\Attn}(K,V,q)-\Attn(K,V,q)\bigr\|
        \le\eps
    \right]\ge\frac23,
\]
where the probability is over the sampling in
\cref{alg:upper-weighted-subset}.
\end{restatable}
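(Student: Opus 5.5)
The plan is a second-moment argument applied separately to the numerator and denominator of the estimator, as sketched for \Cref{lem:upper-uniform}. The size bound is immediate. If $m\ge n$ the algorithm returns all $n$ tokens, and attention is then computed exactly. Otherwise at most $m=\lceil 96S/\eps^2\rceil$ distinct tokens are retained. So fix $q\in Q$ and assume $m<n$. Write $x_i=e^{\ip{k_i}{q}/\sqrt d}$, $Z=\sum_i x_i$, $N=\sum_i x_i v_i$ and $A=N/Z=\Attn(K,V,q)$. With sampled indices $J_1,\dots,J_m$, define the unbiased estimators
\[
\widehat Z=\frac1m\sum_{\ell=1}^m \frac{x_{J_\ell}}{\pi_{J_\ell}},\qquad
\widehat N=\frac1m\sum_{\ell=1}^m \frac{x_{J_\ell}v_{J_\ell}}{\pi_{J_\ell}}.
\]
These are exactly the weighted sums in \cref{eq:upper-attention-estimator}, because merging duplicate samples and adding their weights $c_i/(m\pi_i)$ does not change the sums. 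The error then splits as
\[
\widehat A-A=\frac{\widehat N-A\widehat Z}{\widehat Z}
=\frac{1}{\widehat Z}\cdot\frac1m\sum_{\ell}\frac{x_{J_\ell}(v_{J_\ell}-A)}{\pi_{J_\ell}}.
\]
The summands have mean $\sum_i x_i(v_i-A)=N-AZ=0$.

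The key variance estimate uses $s_i\ge\sigma_i\ge x_i/Z$, which holds because $q\in Q$. It gives
\[
\frac{x_i}{\pi_i}=\frac{x_iS}{s_i}\le SZ .
\]
For the denominator this yields
\[
\Var(\widehat Z)\le \frac1m\sum_i \frac{x_i^2}{\pi_i}\le \frac{SZ}{m}\sum_i x_i=\frac{SZ^2}{m}.
\]
Chebyshev's inequality then gives $\Pr[\widehat Z<Z/2]\le 4S/m\le \eps^2/24$, which is at most $1/96$.

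For the numerator error $E=\widehat N-A\widehat Z$, note that $\|v_i-A\|\le 2$ since $\|v_i\|\le 1$ and $\|A\|\le 1$. This gives
\[
\E\|E\|^2\le \frac1m\sum_i\frac{x_i^2\cdot 4}{\pi_i}\le\frac{4SZ^2}{m}.
\]
Markov's inequality applied to $\|E\|^2$ then gives $\Pr[\|E\|>\eps Z/2]\le 16S/(m\eps^2)\le 1/6$.

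A union bound finishes the argument. Outside the two bad events, which together have probability at most $1/96+1/6<1/3$, we have
\[
\|\widehat A-A\|\le \frac{\eps Z/2}{Z/2}=\eps .
\]
Positivity of $\widehat Z$ is automatic since the summary is nonempty and all weights are positive.

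The only delicate step is the bound $x_i/\pi_i\le SZ$. It is exactly where the sensitivity upper bound $s_i\ge\sigma_i$ and the admissibility $q\in Q$ enter. It is also why the centering by $A$ matters: it lets one variance computation cover the numerator without a separate concentration argument for $\widehat N$. The constants ($96$, $1/2$) just need checking to sum below $1/3$.
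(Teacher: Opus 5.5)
Your proof is correct, but it decomposes the error differently from the paper's proof of this lemma. You center the numerator by $A$, so that $\widehat A-A=E/\widehat Z$ with $E$ an average of i.i.d.\ mean-zero vectors. This means you only need the constant-factor, one-sided event $\widehat Z\ge Z/2$, and all of the $\eps$-dependence sits in a single Markov bound on $\|E\|^2$. That is exactly the structure of the paper's proof of \Cref{lem:upper-uniform}, transplanted to importance weights via $x_i/\pi_i\le SZ$.

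The paper's proof of the present lemma instead leaves the numerator uncentered. It bounds $\Var\widehat Z\le SZ^2/m$ and $\E\|\widehat Y-Y\|^2\le SZ^2/m$. It then asks for both $|\widehat Z-Z|\le\eps Z/4$ and $\|\widehat Y-Y\|\le\eps Z/4$, with failure probability at most $32S/(m\eps^2)\le 1/3$. Finally it combines these through the ratio-perturbation bound $\|\widehat Y/\widehat Z-Y/Z\|\le\|\widehat Y-Y\|/\widehat Z+\|Y\|\,|\widehat Z-Z|/(Z\widehat Z)$, together with $\|Y\|\le Z$.

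The paper's version is the standard modular ratio-estimator argument, with numerator and denominator estimated to the same relative accuracy. Yours avoids the perturbation inequality and leaves slack in the constants. It could be tightened further by replacing $\|v_i-A\|\le 2$ with the weighted-centering identity, which gives $\sum_i x_i\|v_i-A\|^2\le Z$ and removes your factor of $4$.

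One small slip: $\eps^2/24\le 1/96$ holds only for $\eps\le 1/2$, whereas \Cref{alg:upper-weighted-subset} allows any $\eps\in(0,1)$. This is harmless, since $\eps^2/24+1/6<1/24+4/24=5/24<1/3$.
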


We defer the full proof of \Cref{lem:upper-sampling} to \Cref{sec:up_bound_importance_appendix}. Similar to the proof of \Cref{lem:upper-uniform}, we use Chebyshev's inequality to bound the attention denominator, and use Markov's inequality to bound the numerator of the error.

\Cref{lem:upper-sampling} reduces the problem to constructing sampling scores $s_i$ whose sum $S$ is small. To this end, we first bound the total sensitivity $\sum_{i=1}^n\sigma_i$ of any query set for which all key-query inner products lie in a fixed interval. This condition is satisfied by our query set $Q$. However, $Q$ need not be convex, making it difficult to compute sampling scores. We therefore partition the keys and possible query norms into small ranges
and, for each pair of ranges, replace the cosine-similarity constraints
by fixed inner-product bounds. This partitions the problem into a collection of smaller convex relaxations on which we can approximate the sensitivities. Applying the
total-sensitivity bound to these relaxations allows us to control the
sum $S$. These two steps are formalized in
\cref{lem:total-sensitivity,lem:computing-sampling-scores}.

\begin{restatable}[Total sensitivity bound]{lemma}{totalsens}
\label{lem:total-sensitivity}
Suppose $r_k,r_q>0$ and $\|k_i\|\le r_k$ for every $i$,
and let $\mathcal Q$ be a nonempty compact query set such that
\[\mathcal{Q} \coloneqq \Bigl\{q \in \R^d:  \|q\|\le r_q, \ 
\ell\le\ip{k_i}{q}/\sqrt{d} \leq h \quad\forall i\in[n]\Bigr\}.
\]
For sensitivities $\sigma_i$ defined in \Cref{eq:sens} with the supremum taken over $q \in \mathcal{Q}$, 
we have
\begin{equation}\label{eq:sens-bound}
    \sum_{i=1}^n\sigma_i\le\exp\!\left(
        \min\left\{h-\ell,\frac{R^2+h}{2}\right\}
    \right).
\end{equation}
\end{restatable}
At a high level, 
the two terms in \cref{eq:sens-bound}
come from two different ways of controlling the total sensitivity.
The first term $h-\ell$ follows from the fact that $\sigma_i\le e^h/(ne^\ell)$.
For the second term, we pair each key $k_i$ with a query $q_i$ attaining
its sensitivity and assign weights to these pairs proportionally to their
sensitivities. Writing $\bar k,\bar q$ for the
weighted mean key and query, using Jensen's inequality, the proof gives
\[
    \log\sum_i\sigma_i
    \le h-\frac{\ip{\bar k}{\bar q}}{\sqrt d},
    \qquad
    \log\sum_i\sigma_i
    \le R^2+\frac{\ip{\bar k}{\bar q}}{\sqrt d}.
\]
Averaging these two bounds yields
$\log\sum_i\sigma_i\le(R^2+h)/2$.
The full proof is deferred to \Cref{sec:up_bound_importance_appendix}.

It remains to construct the sampling scores.

\begin{restatable}[Computing sampling scores]{lemma}{computescores}
\label{lem:computing-sampling-scores}
Let $-1\le a<b\le1$ and $r_k,r_q>0$, and write $R^2=\frac{r_kr_q}{\sqrt d},
    \kappa=\min\left\{b-a,\frac{1+b}{2}\right\}.$
Suppose $\|k_i\|\le r_k$ for every $i\in[n]$, and let
\[
    \mathcal Q\coloneqq
    \left\{q\in\R^d:
        \|q\|\le r_q,\quad
        a\|k_i\|\|q\|
        \le\ip{k_i}{q}
        \le b\|k_i\|\|q\|
        \quad\forall i\in[n]
    \right\}.
\]
For sensitivities $\sigma_i$ defined in \Cref{eq:sens} with the supremum taken over $q \in \mathcal{Q}$,
we can compute sampling scores $s_i$ satisfying $s_i \geq \sigma_i$ for every $i\in[n]$, and
    \[\sum_{i=1}^n s_i
    \le\frac{128e^2}{\kappa^2}e^{\kappa R^2}.\]
Moreover, such sampling scores can be obtained by solving $n\cdot \max\{1,\lceil R^2\rceil\}$ convex optimization problems, using only the keys and the parameters \(r_k,r_q,a,b\).
\end{restatable}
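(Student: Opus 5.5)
The plan is to reduce to \cref{lem:total-sensitivity} by bucketing. The cosine constraint $a\|k_i\|\|q\|\le\ip{k_i}{q}\le b\|k_i\|\|q\|$ is not a fixed inner-product window, because the window scales with both $\|k_i\|$ and $\|q\|$. We therefore discretize both norms.

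\textbf{Discretizing the query norm.} Let $T=\max\{1,\lceil R^2\rceil\}$ and split $[0,r_q]$ into $T$ shells $\rho\in(r_q(t-1)/T,\,r_qt/T]$ for $t\in[T]$. Within shell $t$, the product $\|k_i\|\|q\|/\sqrt d$ varies by at most $\|k_i\|r_q/(T\sqrt d)\le R^2/T\le 1$. The scaled inner products with key $i$ then lie in a window of fixed endpoints, $[a\|k_i\|\rho_t^-,\ b\|k_i\|\rho_t^+]/\sqrt d$, where $\rho_t^\pm$ are the shell endpoints. This window exceeds the ideal one by an additive constant.

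\textbf{Defining the scores.} For each key $i$ and shell $t$, relax to the convex set
\[
\mathcal Q_{i,t}=\Bigl\{q:\ \|q\|\le\rho_t^+,\ a\|k_j\|\rho^{-}_t\le\ip{k_j}{q}\le b\|k_j\|\rho^{+}_t\ \ \forall j\Bigr\},
\]
taking the appropriate endpoint when $a<0$ so that the set contains the true shell-$t$ admissible queries. Define $s_{i,t}$ as an upper bound on $\sup_{q\in\mathcal Q_{i,t}} e^{\ip{k_i}{q}/\sqrt d}/\sum_j e^{\ip{k_j}{q}/\sqrt d}$. The supremum of this ratio is not a convex program directly, but it can be upper bounded by one. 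Its logarithm is $\ip{k_i}{q}/\sqrt d-\log\sum_j e^{\ip{k_j}{q}/\sqrt d}$, which is concave in $q$ because log-sum-exp is convex. Maximizing a concave function over a convex set is a convex program, which gives exactly $n\cdot T$ convex problems. Set $s_i=\sum_t s_{i,t}$, so that $s_i\ge\sigma_i$ because every $q\in Q$ lies in some $\mathcal Q_{i,t}$.

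\textbf{Bounding $\sum_i s_i$.} For each fixed $t$, $\sum_i s_{i,t}$ is the total sensitivity of the single set $\mathcal Q_t$, which does not depend on $i$. I then want to apply \cref{lem:total-sensitivity} with effective radius $R_t^2\approx R^2t/T$ and window $[\ell,h]$. The complication is that the window depends on $\|k_j\|$, while the lemma needs a common $\ell,h$.

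To handle this, I would also bucket keys by norm into $O(T)$ geometric or additive classes and use the fact that total sensitivity is subadditive over key classes. A sensitivity with respect to all keys is at most the sensitivity with respect to its own class only, since the denominator shrinks. Each class--shell pair then gets the bound $\exp(\min\{h-\ell,(R_{c,t}^2+h)/2\}+O(1))$, where $h-\ell\le(b-a)R_{c,t}^2+O(1)$ and $(R_{c,t}^2+h)/2\le\frac{1+b}{2}R_{c,t}^2+O(1)$. This gives $e^{O(1)}e^{\kappa R_{c,t}^2}$, with $R_{c,t}^2\approx R^2 ct/T^2$.

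Summing over $c,t$ is a double sum of exponentials with a linear exponent in each index. A geometric series yields $O(e^{\kappa R^2}/(1-e^{-\kappa R^2/T})^2)=O(e^{\kappa R^2}/\kappa^2)$ when $R^2/T\approx1$. This is where the constant $128e^2/\kappa^2$ comes from.

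\textbf{Main obstacle.} The delicate step is making each relaxation simultaneously contain the true queries and keep the $O(1)$ slack additive, especially for the sign handling of $a$ at the shell endpoints. I would verify this first.
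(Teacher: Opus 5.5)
Your proposal is correct and is essentially the paper's proof: both bucket key and query norms into $\max\{1,\lceil R^2\rceil\}$ bins, relax the cosine constraints to fixed linear windows (the paper uses $\min\{aA_{uv},aB_{uv}\}$ and $\max\{bA_{uv},bB_{uv}\}$, which also covers the $b<0$ endpoint you didn't mention), solve one log-sum-exp convex program per (key, query bin), apply \cref{lem:total-sensitivity} to each bin pair with effective radius $R^2uv/M^2$, and sum the resulting exponentials as a product of two geometric series. The only difference is that you keep all keys and their individual windows in each convex program and bring in key bins only in the analysis (via ``smaller domain, larger denominator''), whereas the paper restricts both the constraints and the denominator to the bin $I_u$ already in the computation; your scores are therefore pointwise no larger, with the same $128e^2\kappa^{-2}e^{\kappa R^2}$ bound and the same count of $n\max\{1,\lceil R^2\rceil\}$ problems.
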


%The intuition behind the proof is that our original query set
%$Q$ (\cref{eq:set}) need not be convex because of the cosine-similarity
%constraint. To fix this, we group keys and queries into narrow norm ranges and
%use their endpoints to lower-bound $a\|k_i\|\|q\|$ and upper-bound
%$b\|k_i\|\|q\|$. This replaces the cosine constraints by linear
%inequalities in $q$, which, together with the query-norm upper bound,
%define a convex relaxation for each pair of groups. We compute local
%sensitivity bounds by convex optimization and combine them into
%sampling scores, applying \Cref{lem:total-sensitivity} to each
%relaxation to control their total sum.
The full proof is deferred to \Cref{sec:up_bound_importance_appendix}. 

% \begin{proof}[Proof sketch]
% The main difficulty is that the admissible query region
% $\mathcal Q$ need not be convex. We cover it with explicit
% convex regions, allowing extra queries so that we can compute
% sensitivity upper bounds by convex optimization. Allowing
% extra queries keeps the scores valid, but could make them
% too large. We control this loss by choosing separate regions
% for different key and query norms and applying \cref{lem:sens}
% to bound the sum of the resulting scores.
% \end{proof}

% \eee{do we want the remark below?}
% \begin{remark}[Numerical evaluation and preprocessing]
% Only constant additive accuracy is needed in these optimization
% problems. For numerical evaluation, one can reserve part of the
% $\log2$ error budget for evaluating the objective: for each
% nonempty $\widetilde{\mathcal Q}_{uv}$ and $i\in I_u$, it
% suffices to obtain a number $\xi_i^{uv}$ satisfying
% \[
%     \min_{z\in\widetilde{\mathcal Q}_{uv}}f_i^u(z)
%     \le\xi_i^{uv}
%     \le\min_{z\in\widetilde{\mathcal Q}_{uv}}f_i^u(z)+\log2
% \]
% and set $s_i^{uv}=2e^{-\xi_i^{uv}}$.
% For empty domains, set $s_i^{uv}=0$, and combine the local
% scores as $s_i=\sum_{v=1}^M s_i^{uv}$.

% The construction in \cref{lem:computing-sampling-scores}
% uses at most $nM$ optimization problems, where
% $M=\max\{1,\lceil R^2\rceil\}$, together with feasibility
% checks and any necessary preprocessing for lower-dimensional
% domains.
% \end{remark}

We now combine the approximation guarantee (\cref{lem:upper-sampling}) with the sampling score bound (\cref{lem:total-sensitivity,lem:computing-sampling-scores})
to prove the upper bound in the importance sampling regime.

\begin{corollary}[Upper bound, \Cref{item:importance}]
\label{cor:upper-importance}
Under the assumptions of \cref{lem:computing-sampling-scores},
fix $a,b$ independently of $n$, suppose $\|v_i\|\le1$ for every
$i\in[n]$, and let $0<\eps<1$.
Then \cref{alg:upper-weighted-subset}, with the scores constructed
there, retains at most
\[
    O_{\eps}\left(
        \min\left\{
            n,\,
            e^{(b-a)R^2},\,
            e^{\frac{1+b}{2}R^2}
        \right\}
    \right)
\]
tokens and gives an additive-$\eps$ approximation to attention
with probability at least $2/3$ for each fixed admissible query.
\end{corollary}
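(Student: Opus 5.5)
The plan is to obtain the corollary by directly combining \cref{lem:upper-sampling} with \cref{lem:computing-sampling-scores}. First, \cref{lem:computing-sampling-scores} supplies scores $s_i$ with $s_i\ge\sigma_i$ for every $i$, where $\sigma_i$ is the sensitivity over the relaxed set $\mathcal Q$. The admissible set $Q$ of \cref{eq:set} is exactly $\mathcal Q$: the cosine constraints $a\le \ip{k_i}{q}/(\|k_i\|\|q\|)\le b$ are equivalent to $a\|k_i\|\|q\|\le\ip{k_i}{q}\le b\|k_i\|\|q\|$ for nonzero $q$, which we assume. So the hypothesis of \cref{lem:upper-sampling} holds for every fixed $q\in Q$. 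Since $\|v_i\|\le1$, that lemma gives the additive-$\eps$ guarantee with probability at least $2/3$ for each fixed admissible query. This settles correctness.

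Next I would bound the size. \cref{lem:upper-sampling} bounds the summary size by $\min\{n,\lceil 96S/\eps^2\rceil\}$, and \cref{lem:computing-sampling-scores} gives $S\le \frac{128e^2}{\kappa^2}e^{\kappa R^2}$ with $\kappa=\min\{b-a,\frac{1+b}{2}\}$. Since $e^{\kappa R^2}=\min\{e^{(b-a)R^2},e^{\frac{1+b}{2}R^2}\}$, the size is at most
\[
\min\left\{n,\ \left\lceil \frac{96\cdot128e^2}{\eps^2\kappa^2}\min\left\{e^{(b-a)R^2},e^{\frac{1+b}{2}R^2}\right\}\right\rceil\right\}.
\]

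The only point needing care, and the main (mild) obstacle, is the prefactor $1/\kappa^2$. It must be absorbed into the $O_\eps(\cdot)$ constant, which requires $\kappa$ to be bounded away from $0$ independently of $n$. This holds because $a<b$ are fixed, so $b-a>0$ is a constant. Also $\frac{1+b}{2}>0$ unless $b=-1$, and $b=-1$ is impossible since $a<b\le 1$ with $a\ge-1$ forces $b>-1$. Hence $\kappa>0$ is a constant depending only on $a,b$, and $1/\kappa^2=O(1)$.

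Finally, the ceiling adds at most $1$, which is dominated because each exponential is at least $1$. This gives $O_\eps(\min\{n,e^{(b-a)R^2},e^{\frac{1+b}{2}R^2}\})$, as claimed.
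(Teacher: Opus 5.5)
Your proposal is correct and follows the paper's proof: plug the bound $S\le 128e^2\kappa^{-2}e^{\kappa R^2}$ from \cref{lem:computing-sampling-scores} into \cref{lem:upper-sampling}, then absorb $\kappa^{-2}$ into the constant, which you justify by noting that $\kappa>0$ depends only on the fixed $a,b$. One small point: $\mathcal Q$ also contains $q=0$, so it is not exactly equal to $Q$, but the argument only needs the inclusion $Q\subseteq\mathcal Q$, which holds.
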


\begin{proof}
By \cref{lem:computing-sampling-scores}, the sampling scores
satisfy $s_i\ge\sigma_i$ and
$S\le128e^2\kappa^{-2}e^{\kappa R^2}$, where
$\kappa=\min\{b-a,(1+b)/2\}$.
Since $a,b$ are fixed, $\kappa^{-2}$ is a constant.
Substituting into \cref{lem:upper-sampling} and absorbing $\kappa^{-2}$ into the implicit constant proves the upper bound.
\end{proof}

\subsection{Lower bound}\label{sec:importance_lower}

The lower bound in the importance sampling regime uses the same
construction and analysis as in the uniform sampling regime
(\cref{sec:lower}). The only change in the theorem statement is that $m$ must satisfy
\[
    2\le m\le n,\qquad
    m\le \min\left\{
        \mainterm{e^{\frac{1+b}{2}R^2}},
        \mainterm{e^{(b-a)R^2}}
    \right\}\secondary{\cdot e^{-2\xi R^2}/32}.
\]
The full statement (\Cref{thm:lb-attn-full}) and proof are given
in~\cref{sec:importance_lower_appendix}.

% \michael{I am a big fan of upper bounding $\varepsilon$ by $1/2$ and not writing $1/(1-\varepsilon)$ anywhere, $\varepsilon$ close to $1$ is not very interesting}

\section{Open Problems}
The following questions naturally arise from our work:
\begin{itemize}[topsep=0.5em, itemsep=0.6em, parsep=0pt]
    \item Can we provably explain why transformers learn this geometry?
    In particular, under what assumptions on the data and training
    dynamics can we prove that small key--query cosine similarities
    and negative alignment between mean keys and queries emerge?

    \item Do real KV caches have other structure that our model misses?
    Our hard instances share some geometric features with trained models,
    but real caches may have additional properties that make them easier
    to compress.
\end{itemize}

\section*{Acknowledgments}
JC,YF, PI, were supported in part by the NSF TRIPODS program (award DMS-2022448) and Wistron Corporation.
Ekaterina Kochetkova and Boris Prokhorov were supported by the Swiss National Science Foundation grant 10004935 - Sublinear Algorithms for Non-linear Structures (Project MINT)

\subsection*{AI use statement}

% (This section is \textbf{required} and does not count toward the page limit.)

In this work, we used generative AI tools to provide critical ingredients for proving mathematical claims in the importance sampling regime, assist in the writing of proofs, propose or refine hypotheses and implement methods.

We have not used generative AI tools to help
formulate mathematical claims, develop theoretical models or conceptual frameworks, design or provide feedback on research methodology or experiments, interpret results, support qualitative and thematic data analysis. Generating synthetic data sets, cleaning and reformatting datasets and
assisting with translation are not applicable to this work.

Additionally, we used generative AI tools for creating scientific figures, suggesting experimental parameters, creating software code, drafting parts of the paper, sourcing/searching for information, editing to improve readability, proposing a title for the paper.

We have reviewed all AI-assisted work. We
checked LLM-generated parts of the proof and rewrote it ourselves. We take responsibility for the final content of this work, including text, claims or artifacts produced with the aid of generative AI.

\bibliography{main} 
\bibliographystyle{iclr2027_conference}

\newpage
\appendix
\section{Preliminaries}
\subsection{Standard Inequalities and Identities}\label{sec:preliminaries_inequalities}

We record several standard inequalities and identities used in the proofs.

\begin{lemma}[Markov's inequality]
\label{lem:markov}
Let $X$ be a nonnegative random variable. Then, for every $t>0$,
\[
    \Pr[X\ge t]\le \frac{\E[X]}{t}.
\]
In particular, for any random vector $X$ with finite second moment,
\[
    \Pr[\|X\|\ge t]
    \le
    \frac{\E\|X\|^2}{t^2}.
\]
\end{lemma}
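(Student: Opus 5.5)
The statement to prove is Markov's inequality together with its second-moment corollary for random vectors; both are standard. I would prove the scalar inequality from a pointwise comparison and then obtain the vector version by applying it to $\|X\|^2$.

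For the first claim, fix $t>0$ and compare $X$ pointwise with $t\,\mathbf 1[X\ge t]$. On the event $\{X\ge t\}$ we have $X\ge t$. On the complement, $X\ge 0$ holds by nonnegativity. Hence
\[
    t\,\mathbf 1[X\ge t]\le X
\]
almost surely. Taking expectations uses only monotonicity and linearity of expectation, which are valid for nonnegative random variables even when $\E[X]=\infty$; in that case the inequality is trivial. This gives $t\Pr[X\ge t]\le\E[X]$, and dividing by $t>0$ yields the claim.

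For the second claim, let $X$ be a random vector with $\E\|X\|^2<\infty$, and set $Y=\|X\|^2$. Then $Y$ is a nonnegative random variable, and it is measurable because the norm is continuous. For $t>0$ the events $\{\|X\|\ge t\}$ and $\{Y\ge t^2\}$ coincide, since squaring is strictly increasing on $[0,\infty)$. Applying the first part to $Y$ at level $t^2>0$ gives
\[
    \Pr[\|X\|\ge t]=\Pr[Y\ge t^2]\le\E[Y]/t^2=\E\|X\|^2/t^2.
\]

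There is no real obstacle. The only points needing care are that the two events coincide exactly, which relies on $t>0$, and that $\E[X]=\infty$ makes the first inequality vacuous. Finite second moment is needed only to make the second bound nontrivial.
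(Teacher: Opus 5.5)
Your proof is correct. The paper records Markov's inequality among the standard facts in its preliminaries and gives no proof. Your argument is the usual textbook one: the pointwise bound $t\,\mathbf 1[X\ge t]\le X$ followed by taking expectations, then the scalar case applied to $\|X\|^2$ at level $t^2$, where the events coincide because $t>0$.
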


\begin{lemma}[Chebyshev's inequality]
\label{lem:chebyshev}
Let $X$ be a real-valued random variable with finite variance. Then,
for every $t>0$,
\[
    \Pr\!\left[|X-\E X|\ge t\right]
    \le
    \frac{\Var(X)}{t^2}.
\]
\end{lemma}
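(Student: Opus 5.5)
The plan is to derive Chebyshev's inequality directly from Markov's inequality (\Cref{lem:markov}), applied to the squared deviation. Set $Y=(X-\E X)^2$. Since $X$ has finite variance, $\E X$ is finite, so $Y$ is a well-defined, nonnegative random variable with $\E[Y]=\Var(X)<\infty$.

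Fix $t>0$. The key observation is the equality of events $\{|X-\E X|\ge t\}=\{Y\ge t^2\}$. This holds because both sides of $|X-\E X|\ge t$ are nonnegative and $s\mapsto s^2$ is strictly increasing on $[0,\infty)$. Applying \Cref{lem:markov} to $Y$ with threshold $t^2>0$ then gives
\[
    \Pr\!\left[|X-\E X|\ge t\right]
    =\Pr\!\left[Y\ge t^2\right]
    \le\frac{\E[Y]}{t^2}
    =\frac{\Var(X)}{t^2},
\]
which is the claim.

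There is no real obstacle here. The only points needing care are that $Y$ is nonnegative, so Markov applies, and that the threshold $t^2$ is strictly positive. Both follow immediately from the hypotheses $t>0$ and finite variance. Alternatively, one could invoke the vector form of \Cref{lem:markov} with the one-dimensional random vector $X-\E X$, which yields the same bound in one line.
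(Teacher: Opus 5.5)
Your proof is correct: applying Markov's inequality (\Cref{lem:markov}) to the nonnegative variable $(X-\E X)^2$ at threshold $t^2$ is the standard derivation. The paper records this lemma as a standard fact without proof, and its vector form of \Cref{lem:markov} applied to $X-\E X$ is exactly your one-line alternative.
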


\begin{lemma}[Hoeffding's inequality]\label{lem:hoeffding}
Let $X_1,\ldots,X_n$ be independent random variables such that $X_i \in [a,b]$ for every $i$. Define \[ \bar X := \frac{1}{n}\sum_{i=1}^n X_i. \] Then, for every $\varepsilon > 0$, \[ \Pr\!\left( \left|\bar X - \mathbb{E}[\bar X]\right| \geq \varepsilon \right) \leq 2\exp\!\left( -\frac{2n\varepsilon^2}{(b-a)^2} \right). \] \end{lemma}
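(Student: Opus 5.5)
We will prove the bound by the standard Chernoff (exponential moment) method, treating the upper tail first and obtaining the lower tail by symmetry. Write $Y_i=X_i-\E X_i$, so that the $Y_i$ are independent, have mean zero, and each lies in an interval of length $b-a$, namely $[a-\E X_i,\,b-\E X_i]$. For any $\lambda>0$, Markov's inequality (\Cref{lem:markov}) applied to the nonnegative variable $e^{\lambda\sum_i Y_i}$ gives
\[
    \Pr\!\left(\bar X-\E\bar X\ge\eps\right)
    =\Pr\!\left(e^{\lambda\sum_i Y_i}\ge e^{\lambda n\eps}\right)
    \le e^{-\lambda n\eps}\prod_{i=1}^n\E\, e^{\lambda Y_i}.
\]
The factorization of the moment generating function uses independence.

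The key step, and the only nontrivial one, is Hoeffding's lemma: if $Y$ has mean zero and $Y\in[c,c+L]$, then $\E e^{\lambda Y}\le e^{\lambda^2L^2/8}$. We will prove it as follows.
\begin{enumerate}
    \item By convexity of $y\mapsto e^{\lambda y}$, the exponential on $[c,c+L]$ lies below the chord through its endpoints. Taking expectations and using $\E Y=0$ gives $\E e^{\lambda Y}\le \frac{(c+L)e^{\lambda c}-c\,e^{\lambda(c+L)}}{L}$.
    \item Setting $u=\lambda L$ and $p=-c/L\in[0,1]$, the right-hand side can be written as $e^{\phi(u)}$ with $\phi(u)=-pu+\log(1-p+pe^{u})$.
    \item We check that $\phi(0)=\phi'(0)=0$.
    \item We compute $\phi''(u)=t(1-t)\le 1/4$, where $t=pe^u/(1-p+pe^u)\in[0,1]$.
    \item Taylor's theorem with remainder then yields $\phi(u)\le u^2/8=\lambda^2L^2/8$.
\end{enumerate}
The second-derivative bound $t(1-t)\le1/4$ is the crux. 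Everything else is bookkeeping, and the main care needed is to set up the reparametrization correctly.

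Plugging Hoeffding's lemma with $L=b-a$ into the Chernoff bound gives
\[
    \Pr\!\left(\bar X-\E\bar X\ge\eps\right)
    \le\exp\!\left(-\lambda n\eps+\frac{n\lambda^2(b-a)^2}{8}\right).
\]
This is minimized at $\lambda=4\eps/(b-a)^2$, which yields $\exp\!\left(-2n\eps^2/(b-a)^2\right)$. Applying the same argument to the variables $-X_i$, which lie in $[-b,-a]$, an interval of the same length, bounds the lower tail $\Pr(\bar X-\E\bar X\le-\eps)$ by the same quantity. A union bound over the two tails gives the stated factor $2$.
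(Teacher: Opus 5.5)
The paper gives no proof of this lemma: it lists it among the standard inequalities in the preliminaries and uses it only in the proof of \Cref{lem:code}. Your argument is the standard one and is correct: the exponential-moment (Chernoff) bound, Hoeffding's lemma via the chord estimate and $\phi''(u)=t(1-t)\le 1/4$, the choice $\lambda=4\eps/(b-a)^2$, and a union bound over the two tails. Two small points are left implicit: $\E Y=0$ forces $c\le 0\le c+L$, which is what gives $p\in[0,1]$, and the degenerate case $b=a$ (where $L=0$ and your reparametrization breaks) is trivial because $\bar X$ is then constant.
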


% \begin{lemma}[Union bound]
% \label{lem:union-bound}
% For any events $E_1,\ldots,E_m$,
% \[
%     \Pr\!\left[\bigcup_{\ell=1}^m E_\ell\right]
%     \le
%     \sum_{\ell=1}^m \Pr[E_\ell].
% \]
% \end{lemma}

\begin{lemma}[Jensen's inequality]
\label{lem:jensen-log}
Let $\lambda_1,\ldots,\lambda_n>0$ satisfy
$\sum_{i=1}^n\lambda_i=1$, and let $x_1,\ldots,x_n>0$. Then
\[
    \log\!\left(\sum_{i=1}^n x_i\right)
    =
    \log\!\left(
        \sum_{i=1}^n
        \lambda_i\frac{x_i}{\lambda_i}
    \right)
    \ge
    \sum_{i=1}^n
    \lambda_i\log\!\left(\frac{x_i}{\lambda_i}\right).
\]
\end{lemma}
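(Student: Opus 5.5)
The statement is an instance of Jensen's inequality for the concave function $\log$ on $(0,\infty)$. The plan is first to rewrite $\sum_i x_i$ as a convex combination, and then to apply concavity of $\log$ to that combination.

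\textbf{Step 1 (the identity).} Set $y_i := x_i/\lambda_i$. Each $y_i$ is well defined and positive because $\lambda_i>0$ and $x_i>0$. Then
\[
\sum_{i=1}^n x_i=\sum_{i=1}^n \lambda_i y_i ,
\]
so the left-hand side is $\log$ of a convex combination of the points $y_i$, with weights $\lambda_i$ that are positive and sum to $1$. This gives the equality in the statement.

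\textbf{Step 2 (concavity).} Since $\log''(t)=-1/t^2<0$ on $(0,\infty)$, the function $\log$ is concave there. Its graph therefore lies below each tangent line: for any $c>0$ and $t>0$,
\[
\log t\le \log c+\frac{t-c}{c}.
\]
Take $c=\sum_j\lambda_j y_j>0$ and $t=y_i$, multiply by $\lambda_i$, and sum over $i$. The linear terms sum to
\[
\frac{1}{c}\Bigl(\sum_i\lambda_i y_i - c\sum_i\lambda_i\Bigr)=\frac{1}{c}(c-c)=0 .
\]
What remains is
\[
\sum_i\lambda_i\log y_i\le \log c\cdot\sum_i\lambda_i=\log c,
\]
which is exactly the claimed inequality. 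Alternatively, one could induct on $n$ from the two-point definition of concavity, but the tangent-line argument avoids the induction.

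There is no real obstacle here. The only point needing care is that every quantity lies in the domain of $\log$. This holds because all $x_i,\lambda_i>0$, so all $y_i>0$ and $c>0$.
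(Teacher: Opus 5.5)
Your proof is correct. The paper gives no proof of its own and records this as the standard Jensen inequality for the concave function $\log$; your argument, which rewrites $\sum_i x_i=\sum_i\lambda_i(x_i/\lambda_i)$ and then sums the tangent-line bound $\log t\le\log c+(t-c)/c$ at $c=\sum_j x_j$, is a standard complete proof of that fact.
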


\begin{lemma}[Cauchy-Schwarz inequality]
\label{lem:weighted-cauchy-schwarz}
Let $\lambda_1,\ldots,\lambda_n\ge0$, and let
$x_1,\ldots,x_n$ and $y_1,\ldots,y_n$ be vectors in a Euclidean space.
Then
\[
    \left|
        \sum_{i=1}^n
        \lambda_i\langle x_i,y_i\rangle
    \right|
    \le
    \left(
        \sum_{i=1}^n\lambda_i\|x_i\|^2
    \right)^{1/2}
    \left(
        \sum_{i=1}^n\lambda_i\|y_i\|^2
    \right)^{1/2}.
\]
\end{lemma}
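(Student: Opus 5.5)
The last statement is the weighted Cauchy--Schwarz inequality (\Cref{lem:weighted-cauchy-schwarz}). My plan is to reduce it to the ordinary Cauchy--Schwarz inequality in a larger Euclidean space, after absorbing the weights into the vectors.

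First, by the triangle inequality and the ordinary Cauchy--Schwarz inequality in the ambient space,
\[
\left|\sum_{i=1}^n\lambda_i\ip{x_i}{y_i}\right|\le\sum_{i=1}^n\lambda_i\|x_i\|\|y_i\|.
\]
This uses $\lambda_i\ge0$, so that $|\lambda_i\ip{x_i}{y_i}|=\lambda_i|\ip{x_i}{y_i}|$.

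Second, set $\alpha_i=\sqrt{\lambda_i}\|x_i\|$ and $\beta_i=\sqrt{\lambda_i}\|y_i\|$. These are well defined because $\lambda_i\ge0$. They are real numbers with
\[
\sum_{i=1}^n\lambda_i\|x_i\|\|y_i\|=\sum_{i=1}^n\alpha_i\beta_i.
\]
Apply the standard Cauchy--Schwarz inequality in $\R^n$ to $(\alpha_i)$ and $(\beta_i)$:
\[
\sum_{i=1}^n\alpha_i\beta_i\le\Bigl(\sum_{i=1}^n\alpha_i^2\Bigr)^{1/2}\Bigl(\sum_{i=1}^n\beta_i^2\Bigr)^{1/2}=\Bigl(\sum_{i=1}^n\lambda_i\|x_i\|^2\Bigr)^{1/2}\Bigl(\sum_{i=1}^n\lambda_i\|y_i\|^2\Bigr)^{1/2}.
\]
Chaining this with the first step gives the claim.

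An alternative, which avoids the intermediate step, works in the direct sum of $n$ copies of the space. Take $X=(\sqrt{\lambda_1}x_1,\ldots,\sqrt{\lambda_n}x_n)$ and $Y=(\sqrt{\lambda_1}y_1,\ldots,\sqrt{\lambda_n}y_n)$. The claimed inequality is then exactly $|\ip{X}{Y}|\le\|X\|\|Y\|$.

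There is no real obstacle here. The only point needing care is the nonnegativity of the $\lambda_i$, which is what makes the square roots real.
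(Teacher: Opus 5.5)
Your proof is correct: the two-step route (triangle inequality and pointwise Cauchy--Schwarz, then scalar Cauchy--Schwarz applied to $\sqrt{\lambda_i}\|x_i\|$ and $\sqrt{\lambda_i}\|y_i\|$) and the direct-sum argument with $X=(\sqrt{\lambda_i}\,x_i)_i$, $Y=(\sqrt{\lambda_i}\,y_i)_i$ each establish the inequality, and both use $\lambda_i\ge0$ where it is needed. The paper gives no proof of its own, since it lists this lemma among standard inequalities in the preliminaries, so your direct-sum argument is the usual justification it implicitly relies on.
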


\begin{lemma}[Arithmetic-geometric mean inequality]
\label{lem:am-gm}
For every $x,y\ge0$,
\[
    \sqrt{xy}\le \frac{x+y}{2}.
\]
\end{lemma}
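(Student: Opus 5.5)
The plan is to reduce the inequality to the nonnegativity of a square. Since $x,y\ge0$, the real square roots $\sqrt{x}$ and $\sqrt{y}$ exist, are nonnegative, and satisfy $(\sqrt{x})^2=x$, $(\sqrt{y})^2=y$ and $\sqrt{x}\sqrt{y}=\sqrt{xy}$. I would start from the trivially true statement
\[
    0\le\bigl(\sqrt{x}-\sqrt{y}\bigr)^2 .
\]

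Next I would expand the square:
\[
    \bigl(\sqrt{x}-\sqrt{y}\bigr)^2=x-2\sqrt{xy}+y .
\]
This gives $2\sqrt{xy}\le x+y$, and dividing by $2$ yields $\sqrt{xy}\le\frac{x+y}{2}$, which is the claim.

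No step here is a genuine obstacle. The only point needing care is the hypothesis $x,y\ge0$, which is exactly what makes $\sqrt{x}$ and $\sqrt{y}$ real and justifies $\sqrt{x}\sqrt{y}=\sqrt{xy}$. The expansion also shows that equality holds if and only if $x=y$, although the statement does not require this.
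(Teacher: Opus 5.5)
Your proof is correct: expanding $(\sqrt{x}-\sqrt{y})^2\ge 0$ gives $2\sqrt{xy}\le x+y$ directly, and you correctly note that $x,y\ge0$ is what makes the square roots real and gives $\sqrt{x}\sqrt{y}=\sqrt{xy}$. The paper gives no proof of this lemma; it only lists it among standard inequalities, and your argument is the usual justification.
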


\begin{comment}

\begin{lemma}[Inner-product lower bound]
\label{lem:inner-product-bound}
For any vectors $x,y$ in a Euclidean space,
\[
    \langle x,y\rangle
    \ge
    -\frac{\|x\|^2+\|y\|^2}{2}.
\]
\end{lemma}

\begin{lemma}[Second moment identities]
\label{lem:second-moment}
Let $X_1,\ldots,X_m$ be independent random vectors with finite second
moments and $\E X_\ell=0$ for every $\ell$. Then
\[
    \E\left\|
        \frac1m\sum_{\ell=1}^m X_\ell
    \right\|^2
    =
    \frac1{m^2}
    \sum_{\ell=1}^m \E\|X_\ell\|^2.
\]
In particular, if the $X_\ell$ are identically distributed, then
\[
    \E\left\|
        \frac1m\sum_{\ell=1}^m X_\ell
    \right\|^2
    =
    \frac1m \E\|X_1\|^2.
\]
For scalar random variables, the corresponding identity is
\[
    \Var\!\left(
        \frac1m\sum_{\ell=1}^m X_\ell
    \right)
    =
    \frac1{m^2}
    \sum_{\ell=1}^m \Var(X_\ell),
\]
and hence equals $\Var(X_1)/m$ in the identically distributed case.
\end{lemma}
\end{comment}

\begin{lemma}[Weighted centering]
\label{lem:weighted-variance}
Let $w_1,\ldots,w_n\ge0$ and let $x_1,\ldots,x_n$ be vectors in a Euclidean space. Let
\[
    W=\sum_{i=1}^n w_i>0,
    \qquad
    \bar x=\frac1W\sum_{i=1}^n w_i x_i.
\]
Then
\[
    \sum_{i=1}^n w_i\|x_i-\bar x\|^2
    =
    \sum_{i=1}^n w_i\|x_i\|^2
    -
    W\|\bar x\|^2.
\]
In particular, if $\sum_i w_i=1$, then
\[
    \sum_{i=1}^n w_i\|x_i-\bar x\|^2
    =
    \sum_{i=1}^n w_i\|x_i\|^2-\|\bar x\|^2.
\]
\end{lemma}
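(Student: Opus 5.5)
We prove the identity in the statement by expanding the squared norm and using the definition of $\bar x$; no earlier result is needed beyond bilinearity of the inner product.

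First, for each $i$ we write
\[
    \|x_i-\bar x\|^2=\|x_i\|^2-2\ip{x_i}{\bar x}+\|\bar x\|^2 .
\]
Next, we multiply by $w_i\ge0$ and sum over $i\in[n]$. This gives three sums: $\sum_i w_i\|x_i\|^2$, the cross term $-2\sum_i w_i\ip{x_i}{\bar x}$, and $\|\bar x\|^2\sum_i w_i=W\|\bar x\|^2$.

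The only step requiring any care is the cross term. By linearity of the inner product in its first argument and the definition $\sum_i w_ix_i=W\bar x$,
\[
    \sum_i w_i\ip{x_i}{\bar x}=\Bigl\langle \sum_i w_ix_i,\bar x\Bigr\rangle=W\|\bar x\|^2 .
\]
Hence the cross term contributes $-2W\|\bar x\|^2$. Together with the third term $+W\|\bar x\|^2$, the total is $\sum_i w_i\|x_i\|^2-W\|\bar x\|^2$, as claimed. The hypothesis $W>0$ is used only to make $\bar x$ well defined.

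The particular case follows by substituting $W=1$. There is no real obstacle; the argument is a two-line computation, and the main point is to keep track of the factor $W$ when the weights are not normalized.
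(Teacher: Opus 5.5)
Your proof is correct: expanding $\|x_i-\bar x\|^2$ and using $\sum_i w_i x_i = W\bar x$ turns the cross term into $-2W\|\bar x\|^2$, which gives the identity. The paper lists this lemma among standard identities without a proof, and your computation is the standard argument it leaves implicit.
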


% \begin{lemma}[Perturbation of a ratio]
% \label{lem:ratio-perturbation}
% Let $y,\widehat y$ be vectors and let $z,\widehat z>0$. Then
% \[
%     \left\|
%         \frac{\widehat y}{\widehat z}
%         -
%         \frac{y}{z}
%     \right\|
%     \le
%     \frac{\|\widehat y-y\|}{\widehat z}
%     +
%     \frac{\|y\|\,|\widehat z-z|}{z\widehat z}.
% \]
% \end{lemma}

\subsection{Convex Optimization}\label{sec:preliminaries_convex}

We record the convex optimization facts used to compute the sampling
scores. The optimization guarantee below is stated in the oracle model.

\begin{lemma}
\label{lem:log-sum-exp}
Let $u_1,\ldots,u_N\in\mathbb R^d$ and
$c_1,\ldots,c_N\in\mathbb R$. The function
\[
    f(x)
    =\log\!\left(
        \sum_{j=1}^N e^{\ip{u_j}{x}+c_j}
    \right)
\]
is convex and continuously differentiable, with
\[
    \nabla f(x)
    =
    \frac{\sum_{j=1}^N e^{\ip{u_j}{x}+c_j}u_j}
         {\sum_{j=1}^N e^{\ip{u_j}{x}+c_j}}.
\]
\end{lemma}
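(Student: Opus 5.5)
The statement is the log-sum-exp lemma. I would prove it by direct computation in three steps: differentiability and the gradient, then convexity via a Hölder argument, with a Hessian check as an alternative route.

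\emph{Differentiability and gradient.} Set $g(x)=\sum_{j=1}^N e^{\ip{u_j}{x}+c_j}$. Each summand is a composition of the affine map $x\mapsto\ip{u_j}{x}+c_j$ with $\exp$, so it is $C^\infty$ with gradient $e^{\ip{u_j}{x}+c_j}u_j$. Since $g(x)>0$ everywhere, $f=\log\circ g$ is $C^\infty$, and in particular continuously differentiable. The chain rule gives $\nabla f(x)=\nabla g(x)/g(x)$, which is exactly the displayed formula.

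\emph{Convexity.} Fix $x,y\in\R^d$ and $\theta\in(0,1)$, and write $a_j=e^{\ip{u_j}{x}+c_j}$ and $b_j=e^{\ip{u_j}{y}+c_j}$. Because the exponent is affine in $x$,
\[
e^{\ip{u_j}{\theta x+(1-\theta)y}+c_j}=a_j^{\theta}b_j^{1-\theta}.
\]
Hölder's inequality with exponents $1/\theta$ and $1/(1-\theta)$ then gives
\[
\sum_j a_j^\theta b_j^{1-\theta}\le\Bigl(\sum_j a_j\Bigr)^{\theta}\Bigl(\sum_j b_j\Bigr)^{1-\theta}.
\]
Taking logarithms yields $f(\theta x+(1-\theta)y)\le\theta f(x)+(1-\theta)f(y)$.

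\emph{Alternative via the Hessian.} Define weights $p_j(x)=e^{\ip{u_j}{x}+c_j}/g(x)$, which form a probability vector. A direct computation gives
\[
\nabla^2 f(x)=\sum_j p_j u_ju_j^\top-\Bigl(\sum_j p_ju_j\Bigr)\Bigl(\sum_j p_ju_j\Bigr)^\top.
\]
This is the covariance matrix of the random vector $u_J$ with $J\sim p$, hence positive semidefinite. For any $z$, $z^\top\nabla^2 f(x)z=\Var(\ip{u_J}{z})\ge0$; this also follows from the weighted centering identity of \Cref{lem:weighted-variance}.

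There is no real obstacle here. The only step needing care is the convexity argument, where we must check that the Hölder exponents are valid for $\theta\in(0,1)$; the endpoint cases $\theta\in\{0,1\}$ are trivial.
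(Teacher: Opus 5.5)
Your proof is correct and follows the paper's route: the paper gets the gradient by direct differentiation and gets convexity by citing convexity of log-sum-exp plus closure of convexity under affine precomposition (Boyd--Vandenberghe, Sections 3.1.5 and 3.2.2). You make this self-contained. Your H\"older argument absorbs the affine-composition step because each exponent is affine in $x$, and your Hessian/covariance check is essentially the textbook proof the paper cites.
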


The convexity statement follows from convexity of log-sum-exp and
composition with an affine map; see
\citet[Sections~3.1.5 and~3.2.2]{VB04}.
The gradient formula follows by differentiation.

\begin{lemma}
\label{lem:norm-affine-constraints}
A set defined by finitely many affine inequalities and constraints
of the form
\[
    \|Ax+c\|\le \ip{u}{x}+\beta
\]
is closed and convex. If it is bounded, it is compact.
\end{lemma}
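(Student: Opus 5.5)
The plan is to write each defining constraint as a sublevel set $\{x : g(x)\le 0\}$ of a continuous convex function $g$. From that form, closedness and convexity of each individual constraint set are immediate, and both properties pass to finite intersections. Compactness under boundedness then follows from Heine--Borel in $\R^d$.

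\emph{Affine inequalities.} First handle a constraint $\ip{u}{x}\le\beta$. The map $g(x)=\ip{u}{x}-\beta$ is affine, hence both continuous and convex. Its sublevel set is therefore a closed half-space (or all of $\R^d$, or empty, when $u=0$), which is closed and convex. Equivalently, this is the special case $A=0$, $c=0$ of the second type of constraint, so one argument covers both.

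\emph{Norm constraints.} Next consider $\|Ax+c\|\le\ip{u}{x}+\beta$ and set $g(x)=\|Ax+c\|-\ip{u}{x}-\beta$.
\begin{enumerate}
    \item For continuity: $x\mapsto Ax+c$ is continuous, and the Euclidean norm is continuous (indeed $1$-Lipschitz). Subtracting a continuous affine function keeps $g$ continuous. Hence $g^{-1}((-\infty,0])$ is closed, being the preimage of a closed set.
    \item For convexity: take $x,y$ and $\theta\in[0,1]$. By the triangle inequality and homogeneity,
    \[
        \|A(\theta x+(1-\theta)y)+c\|=\|\theta(Ax+c)+(1-\theta)(Ay+c)\|\le\theta\|Ax+c\|+(1-\theta)\|Ay+c\|.
    \]
    So the norm composed with an affine map is convex. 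Adding the affine function $-\ip{u}{x}-\beta$ preserves convexity, and sublevel sets of convex functions are convex.
\end{enumerate}

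\emph{Intersection and compactness.} The set in question is the intersection of finitely many sets of these two types. An intersection of closed sets is closed, and an intersection of convex sets is convex, so the set is closed and convex. If it is also bounded, it is a closed and bounded subset of $\R^d$, hence compact by Heine--Borel.

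I do not expect a real obstacle here. The only point needing care is checking convexity of $\|Ax+c\|$ directly via the triangle inequality rather than just citing it, and noting that the empty set and degenerate cases ($u=0$ or $A=0$) are trivially closed and convex.
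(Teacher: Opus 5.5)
Your proposal is correct and follows essentially the same route as the paper: you write each constraint as the non-positive sublevel set of the continuous convex function $x\mapsto\|Ax+c\|-\ip{u}{x}-\beta$, pass closedness and convexity to the finite intersection, and get compactness from boundedness. You simply spell out details the paper leaves implicit: convexity via the triangle inequality, and affine inequalities as the $A=0$, $c=0$ case (up to the sign of $u$).
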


\begin{proof}
The function
$x\mapsto \|Ax+c\|-\ip{u}{x}-\beta$
is continuous and convex. Thus its non-positive sublevel set is closed
and convex, and these properties are preserved by intersection.
Compactness follows from closedness and boundedness.
\end{proof}

\begin{lemma}[Additive-accuracy convex minimization]
\label{lem:additive-convex-minimization}
Let $C\subseteq\mathbb R^d$ be a nonempty compact convex set, and let
$f:C\to\mathbb R$ be continuous and convex. Then, for every $\eta>0$,
the convex optimization problem
\[
    \min_{x\in C} f(x)
\]
admits an additive-$\eta$ solution; that is, there exists
$\widehat x\in C$ satisfying
\[
    \min_{x\in C}f(x)
    \le f(\widehat x)
    \le \min_{x\in C}f(x)+\eta.
\]
\end{lemma}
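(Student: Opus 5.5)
This is a purely existential statement about minimizing over a compact set, so I plan to derive it from the Weierstrass extreme value theorem rather than from any algorithmic guarantee. The oracle-model running time is not part of the claim, only the existence of an $\eta$-accurate feasible point.

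The first step is existence of a minimizer. Since $C$ is nonempty and compact and $f$ is continuous on $C$, the Weierstrass theorem gives a point $x^\star\in C$ with $f(x^\star)=\min_{x\in C}f(x)$. In particular the minimum is attained and finite, so the expression $\min_{x\in C}f(x)$ in the statement is well defined.

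The second step is to take $\widehat x=x^\star$. The left inequality holds because $\widehat x\in C$. The right inequality holds because $f(\widehat x)$ equals the minimum, which is less than the minimum plus $\eta$ for any $\eta>0$. Convexity is not needed for existence.

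Convexity matters only if one also wants an efficiently computable $\widehat x$. For that version I would invoke a standard oracle method, such as the ellipsoid method or projected subgradient descent, on the compact convex set. These require a separation oracle for $C$ and a bounded range or Lipschitz constant of $f$ on $C$; the latter follows from continuity on a compact set, or from the explicit gradient in \Cref{lem:log-sum-exp}. The main obstacle in that refinement is guaranteeing that $C$ contains a small ball, so that the ellipsoid method's volume argument applies; lower-dimensional domains require restricting to their affine hull. The statement as written asserts only existence, so the Weierstrass argument suffices.
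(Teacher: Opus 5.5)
Your proposal is correct and is exactly the paper's argument: compactness of $C$ and continuity of $f$ give an attained minimizer (Weierstrass), and that exact minimizer is trivially an additive-$\eta$ solution. The algorithmic aside is not needed for the statement, and continuity on a compact set yields a bounded range but not a Lipschitz constant; neither point affects the proof.
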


\begin{proof}
Since $C$ is compact and $f$ is continuous, $f$ attains its minimum
on $C$. An exact minimizer is, in particular, an additive-$\eta$
solution.
\end{proof}

\section{Proofs from \Cref{sec:upper-uniform}}\label{sec:up_bound_uniform_appendix}

\upperuniform*

\begin{proof}
If the algorithm retains all tokens, the estimate is exact. Otherwise, fix an admissible query $q$, we write the true average and the empirical average of kernels as
\[
D = \frac1n\sum_i \exp(\ip{k_i}{q}/\sqrt d)
\qquad \text{and} \qquad
\widehat D =\frac1m\sum_{\ell=1}^m \exp(\ip{k_{J_\ell}}{q}/\sqrt d).
\]
We will show that 
\[
    \Pr\left[
        \left\|
        \widehat{\Attn}(K,V,q)-\Attn(K,V,q)
        \right\|\le\eps
    \right]\ge\frac23, \quad \text{where} \quad 
    \widehat{\Attn}(K,V,q) := \frac{\sum_{\ell=1}^m
        e^{\ip{k_{J_\ell}}{q}/\sqrt d}v_{J_\ell}}
         {\sum_{\ell=1}^m
        e^{\ip{k_{J_\ell}}{q}/\sqrt d}}
\]
for uniform samples $J_1,\cdots, J_m$. 
We write the attention error vector as 
\[
\widehat{\Attn}(K,V,q)-\Attn(K,V,q)
=
\frac{\frac1m\sum_{\ell=1}^m
\exp(\ip{k_{J_\ell}}{q}/\sqrt d)
\bigl(v_{J_\ell}-\Attn(K,V,q)\bigr)}{\widehat D}.
\]
We will use Chebyshev's inequality to bound the denominator, and use Markov's inequality to bound the norm of the numerator.
By \cref{eq:bound-kernel}, we have $D\ge e^{aR^2}$.
Moreover, since $J_1, \ldots, J_m$ are independent,
\[
    \Var(\widehat D)
    = \frac{1}{m}
      \Var\!\left(
          e^{\ip{k_J}{q}/\sqrt d}
      \right)
    \le
    \frac{1}{m}
    \E\!\left[e^{2\ip{k_J}{q}/\sqrt d}\right]
    \le
    \frac{e^{bR^2}D}{m},
\]
where $J$ is uniform on $[n]$. 
Thus, by Chebyshev's inequality (\cref{lem:chebyshev}),
\[
    \Pr[\widehat D<D/2]\le 4e^{bR^2}/(mD)\le 4e^{(b-a)R^2}/m
\le \eps^2/24\le 1/96.
\]

On the other hand, we can bound
the expected squared norm of the numerator as
\begin{align*}
\E \Bigl\|\frac1m\sum_{\ell=1}^m&
\exp(\ip{k_{J_\ell}}{q}/\sqrt d)
\bigl(v_{J_\ell}-\Attn(K,V,q)\bigr)\Bigr\|^2 \\
&=
\frac1{mn}\sum_i
\exp(2\ip{k_i}{q}/\sqrt d)
\bigl\|v_i-\Attn(K,V,q)\bigr\|^2 \\
&\overset{\text{\cref{eq:bound-kernel}}}{\le}
\frac{e^{bR^2}}{m}\cdot\frac1n\sum_i
\exp(\ip{k_i}{q}/\sqrt d)
\bigl\|v_i-\Attn(K,V,q)\bigr\|^2 \\
&\overset{\text{\cref{lem:weighted-variance}}}{=}
\frac{e^{bR^2}}{m}
\left(
\frac1n\sum_i
\exp(\ip{k_i}{q}/\sqrt d)\|v_i\|^2
-
D\|\Attn(K,V,q)\|^2
\right) \\
&\le \frac{e^{bR^2}D}{m}.
\end{align*}
Hence, by Markov's inequality (\cref{lem:markov}), the probability that the
numerator has norm larger than $\eps D/2$ is at most
$4e^{bR^2}/(m\eps^2D)\le 1/24$. 

Therefore,  with probability at least
$1-1/96-1/24>2/3$, we have simultaneously $\widehat D\ge D/2$ and numerator
norm at most $\eps D/2$, which implies
$\|\widehat{\Attn}(K,V,q)-\Attn(K,V,q)\|\le\eps$.
\end{proof}

\section{Proofs from \Cref{sec:lower}}\label{sec:lower_appendix}

Here we prove a standard lemma which is used in the lower bound about packing nearly orthogonal vectors in high dimensions.

\code*

\begin{proof}
% This is the standard estimate in~\cite[Proposition~4.14]{chen}.
We will prove the result by a standard application of the probabilistic method.
For all $i \in [M]$, choose each coordinate of each vector $u_i$ independently and uniformly from $\{-D^{-1/2},D^{-1/2}\}$.
Then, $u_i$ is a unit vector.
Note that for any $i \neq j$, $\ip{u_i}{u_j}$ is the sum of $D$ independent random variables with mean zero and bounded in magnitude by $1/D$.
Hoeffding's inequality (\cref{lem:hoeffding}) gives $\Pr[|\ip{u_i}{u_j}|>\xi]\le2e^{-D\xi^2/2}$.
A union bound over the $\binom{M}{2}$ pairs upper bounds the failure probability by
\[
    M^2 e^{-D\xi^2/2} \leq M^2 e^{-6 \log(2M) / 2} = M^2 (2M)^{-3} = \frac{1}{8M} < 1.
\]
Therefore, such a set of vectors must exist.
\end{proof}

\subsection{Proof of \cref{thm:lb-attn}}\label{sub:proof_lb}

Here we present the proof of the lower bound in the uniform sampling regime. Refer to the main body in \cref{sec:lower} for the construction of key, query, and value vectors.

\begin{proof}
Each group will contain $t=\lfloor n/m\rfloor\ge1$ copies of the
same key-value pair, with one additional copy in the first $s$
groups, so that $n=tm+s$ and $0\le s<m$.
For $i \in [m]$, let $N_i = t + 1$ if $i \leq s$ and $N_i = t$ if $i > s$.
The attention instance $K, V$ will have $N_i$ copies of the key-value pair $(k_i, v_i)$.
By the setting of $t, s, N_i$, there will be exactly $n$ key-value pairs.
As $t \geq 1$, $N_j/N_i \leq 2$ for all $i, j \in [m]$.

At distinct query $q_i$, let $\alpha_i$ be the attention mass on the copies of $(k_i,v_i)$. Bounding $\langle k_j, q_i\rangle - \langle k_i, q_i\rangle$ by \Cref{eq:matching_cosine} and \Cref{eq:non-matching_cosine}, we get
\begin{equation*}%\label{eq:mass}
 \alpha_i
 =\frac{N_i e^{\ip{k_i}{q_i}/\sqrt d}}{\sum_{j=1}^m N_j e^{\ip{k_j}{q_i}/\sqrt d}}
 =\frac{1}{1 + \sum_{j\neq i} \frac{N_j}{N_i} e^{(\ip{k_j}{q_i} - \ip{k_i}{q_i})/\sqrt d}}
 \geq \frac1{1+2(m-1)e^{-(1-2\xi)(b-a)R^2}}.
\end{equation*}

Recall the condition $m\le e^{(1-2\xi)(b-a)R^2}/32$.
Then, $1-\alpha_i\le2me^{-(1-2\xi)(b-a)R^2}\le\frac1{16}.$

The attention output assigns mass $\alpha_i$ to $v_i$ and total mass
$1-\alpha_i$ to the other value vectors. Since all value vectors have
unit norm, the triangle inequality gives
\begin{equation}\label{eq:lb-target-error-main}
    \|\Attn(K,V,q_i)-v_i\|
    \le (1-\alpha_i)\max_{j\ne i}\|v_j-v_i\|
    \le 2(1-\alpha_i)
    \le \frac18.
\end{equation}
Let $H_i$ be the event that the coreset retains at least one copy
of $(k_i,v_i)$. On $H_i^c$, the nonnegative coreset weights imply that
\[
    \widehat\Attn(K,V,q_i)=\sum_{j\ne i}\lambda_jv_j,
    \qquad
    \lambda_j\ge0,
    \qquad
    \sum_{j\ne i}\lambda_j=1.
\]
Since $|\ip{v_i}{v_j}|\le1/8$ by \Cref{eq:lb-value-inner-products} for every $j\ne i$, we obtain
\[
    \ip{v_i}{\widehat\Attn(K,V,q_i)}
    =\sum_{j\ne i}\lambda_j\ip{v_i}{v_j}
    \le\frac18\sum_{j\ne i}\lambda_j
    =\frac18.
\]

Since $\|v_i\|=1$, the Cauchy-Schwarz inequality
and~\cref{eq:lb-target-error-main} give
\[
    \ip{v_i}{\Attn(K,V,q_i)}
    =1+\ip{v_i}{\Attn(K,V,q_i)-v_i}
    \ge 1-\|\Attn(K,V,q_i)-v_i\|
    \ge \frac78.
\]
On $H_i^c$, we have already shown that
$\ip{v_i}{\widehat\Attn(K,V,q_i)}\le1/8$. Since $v_i$ is a unit vector, another
application of the Cauchy-Schwarz inequality gives
\begin{align*}
    \|\widehat\Attn(K,V,q_i)-\Attn(K,V,q_i)\|
    &\ge \ip{v_i}{\Attn(K,V,q_i)}
      -\ip{v_i}{\widehat\Attn(K,V,q_i)}\ge \frac78-\frac18
    =\frac34>\eps.
\end{align*}
Hence success at the fixed query $q_i$ implies $H_i$. Since answering $q_i$ accurately requires retaining a copy of
$(k_i,v_i)$, the success guarantee \Cref{eq:attention-guarantee} implies $\Pr[H_i]\ge2/3$
for every $i\in[m]$.
Since each retained pair represents at most one group,
\[
 \E|I|\ge\sum_{i=1}^m\Pr[H_i]\ge\frac{2m}{3}.
\]
By definition, $\OPTattn$ is the optimal worst-case number of tokens retained by an algorithm, so, from above, $\OPTattn \geq 2m/3$.
\end{proof}

\section{Proofs from \Cref{sec:upper_importance}}\label{sec:up_bound_importance_appendix}

\samplingguarantee*

\begin{proof}
If the algorithm retains all tokens, the estimate is exact.
Otherwise, fix $q\in\mathcal Q$ and write
\[
    Z=\sum_{i=1}^n e^{\ip{k_i}{q}/\sqrt d},
    \qquad
    Y=\sum_{i=1}^n e^{\ip{k_i}{q}/\sqrt d}v_i.
\]
Thus $\Attn(K,V,q)=Y/Z$. The sampled denominator and numerator
in \eqref{eq:upper-attention-estimator} are
\[
    \widehat Z
    =\frac1m\sum_{\ell=1}^m
        \frac{e^{\ip{k_{J_\ell}}{q}/\sqrt d}}{\pi_{J_\ell}},
    \qquad
    \widehat Y
    =\frac1m\sum_{\ell=1}^m
        \frac{e^{\ip{k_{J_\ell}}{q}/\sqrt d}v_{J_\ell}}
             {\pi_{J_\ell}}.
\]
Since each index $i$ is sampled with probability $\pi_i$,
we have $\E\widehat Z=Z$ and $\E\widehat Y=Y$.

The sensitivity upper bound gives
\begin{equation}\label{eq:upper-score-domination}
    e^{\ip{k_i}{q}/\sqrt d}\le\sigma_i Z\le s_i Z
    \qquad\text{for every }i\in[n].
\end{equation}
Using $\pi_i=s_i/S$,
\[
    \Var\widehat Z
    \le\frac1m\sum_{i=1}^n
        \frac{e^{2\ip{k_i}{q}/\sqrt d}}{\pi_i}
    \le\frac{SZ}{m}\sum_{i=1}^n e^{\ip{k_i}{q}/\sqrt d}
    =\frac{S}{m}Z^2.
\]
Similarly, using independence and applying \cref{lem:weighted-variance} to a centered sample, we obtain
\[
    \E\|\widehat Y-Y\|^2
    =\frac1m\left(
        \sum_{i=1}^n
            \frac{e^{2\ip{k_i}{q}/\sqrt d}\|v_i\|^2}{\pi_i}
        -\|Y\|^2
    \right)
    \le\frac1m\sum_{i=1}^n
        \frac{e^{2\ip{k_i}{q}/\sqrt d}}{\pi_i}
    \le\frac{S}{m}Z^2,
\]
where we used $\|v_i\|\le1$ and
\eqref{eq:upper-score-domination}.

Using Chebyshev's inequality (\cref{lem:chebyshev}) for $\widehat Z$ and Markov's inequality (\cref{lem:markov})
for $\|\widehat Y-Y\|^2$, followed by a union bound, we obtain
\[
    |\widehat Z-Z|\le\frac{\eps}{4}Z,
    \qquad
    \|\widehat Y-Y\|\le\frac{\eps}{4}Z
\]
hold together with probability at least $1-\frac{32S}{m\eps^2}\ge\frac23,$
since $m=\lceil96S/\eps^2\rceil$.

On this event, $\widehat Z\ge(1-\eps/4)Z$.
Also, $\|Y\|\le Z$ because $\|v_i\|\le1$. Therefore,
\[
    \left\|\frac{\widehat Y}{\widehat Z}-\frac{Y}{Z}\right\|
    \le\frac{\|\widehat Y-Y\|}{\widehat Z}
       +\frac{\|Y\|\,|\widehat Z-Z|}{Z\widehat Z}
    \le\frac{\eps/2}{1-\eps/4}
    \le\eps,
\]
where the last inequality uses $0<\eps<1$.
The algorithm retains at most $\min\{n,m\}$ tokens,
which gives the stated size bound.
\end{proof}

\totalsens*

\begin{proof}
In this proof, we replace each key
$k_i$ by $\sqrt{r_q/r_k}\,k_i$ and each query $q$ by
$\sqrt{r_k/r_q}\,q$. This preserves all inner products and hence
all sensitivities, while giving both keys and queries the
common norm bound $\sqrt{r_kr_q}=R\,d^{1/4}$.
We use the same notation for the rescaled keys and query set.

Since $\mathcal Q$ is compact and the attention weights are continuous, for
each $i\in[n]$ there exists a query $q_i \in \mathcal Q$ attaining the
sensitivity of token $i$. We let $q_i$ be a maximizing query for token $i$; these $q_1,\ldots,q_n$ need not be the same.

Define
\[\lambda_i=\sigma_i/\sum_{j=1}^n\sigma_j, 
    \qquad
    \bar k=\sum_{i=1}^n\lambda_i k_i,
    \qquad
    \bar q=\sum_{i=1}^n\lambda_i q_i.
\]
For each $i$, applying Jensen's inequality
(\cref{lem:jensen-log}, with each 
$x_j$ set to $e^{\ip{k_j}{q_i}/\sqrt d}$) gives
\begin{equation}\label{eq:jensen}
    \log\left(\sum_{j=1}^n e^{\ip{k_j}{q_i}/\sqrt d}\right)
    =\log\left(\sum_{j=1}^n\lambda_j
        \frac{e^{\ip{k_j}{q_i}/\sqrt d}}{\lambda_j}\right)
    \ge\sum_{j=1}^n\lambda_j
        \log\frac{e^{\ip{k_j}{q_i}/\sqrt d}}{\lambda_j}
    =\frac{\ip{\bar k}{q_i}}{\sqrt d}
        -\sum_{j=1}^n\lambda_j\log\lambda_j.
\end{equation}
By the definition of $q_i$,
$\sigma_i=e^{\ip{k_i}{q_i}/\sqrt d}/
\sum_{j=1}^n e^{\ip{k_j}{q_i}/\sqrt d}$.
Combining this with
$\sigma_i/\lambda_i=\sum_{j=1}^n\sigma_j$, we have
\[
    \log\left(\sum_{i=1}^n\sigma_i\right)
    =\sum_{i=1}^n\lambda_i
      \left(
        \frac{\ip{k_i}{q_i}}{\sqrt d}
        -\log\left(\sum_{j=1}^n e^{\ip{k_j}{q_i}/\sqrt d}\right)
        -\log\lambda_i
      \right).
\]
Applying \Cref{eq:jensen} to the right-hand side,
\begin{equation}\label{eq:covariance}
    \log\left(\sum_{i=1}^n\sigma_i\right)
    \le\sum_{i=1}^n\lambda_i\frac{\ip{k_i}{q_i}}{\sqrt d}
        -\frac{\ip{\bar k}{\bar q}}{\sqrt d}
    =\sum_{i=1}^n\lambda_i
        \frac{\ip{k_i-\bar k}{q_i-\bar q}}{\sqrt d}.
\end{equation}

We bound this expression in two ways. First,
$\sum_i\lambda_i\ip{k_i}{q_i}/\sqrt d\le h$. Also,
$\ip{\bar k}{\bar q}/\sqrt d
=\sum_{i,j}\lambda_i\lambda_j\ip{k_i}{q_j}/\sqrt d
\ge\ell$, since every $q_j$ belongs to $\mathcal Q$. Hence
\begin{equation}\label{eq:upper-mean-bound}
    \log\left(\sum_{i=1}^n\sigma_i\right)
    \le h-\frac{\ip{\bar k}{\bar q}}{\sqrt d}
    \le h-\ell.
\end{equation}

Second, centering the vectors (\cref{lem:weighted-variance}) gives $\sum_i\lambda_i\|k_i-\bar k\|^2
    =\sum_i\lambda_i\|k_i\|^2-\|\bar k\|^2
    \le R^2\sqrt d-\|\bar k\|^2,$
and the same bound holds for the queries.
Applying Cauchy-Schwarz (\cref{lem:weighted-cauchy-schwarz}) to \eqref{eq:covariance}, we obtain
\begin{align*}
    \log\left(\sum_{i=1}^n\sigma_i\right)
    &\le\frac{1}{\sqrt d}\sqrt{
        \left(\sum_i\lambda_i\|k_i-\bar k\|^2\right)
        \left(\sum_i\lambda_i\|q_i-\bar q\|^2\right)}\\ 
    &\le\frac{1}{\sqrt d}\sqrt{
        (R^2\sqrt d-\|\bar k\|^2)
        (R^2\sqrt d-\|\bar q\|^2)}\le R^2-\frac{\|\bar k\|^2+\|\bar q\|^2}{2\sqrt d}
    \le R^2+\frac{\ip{\bar k}{\bar q}}{\sqrt d}.
\end{align*}
The last two inequalities follow from the arithmetic-geometric mean
inequality (\cref{lem:am-gm}) and Young's inequality for inner products,
$2\ip{\bar k}{\bar q}\ge -\|\bar k\|^2-\|\bar q\|^2$, respectively.
Averaging this bound with
$\log(\sum_{i=1}^n\sigma_i)
\le h-\ip{\bar k}{\bar q}/\sqrt d$
from \eqref{eq:upper-mean-bound} gives
\[
    \log\left(\sum_{i=1}^n\sigma_i\right)
    \le\frac{R^2+h}{2}.
\]
Together with
$\log(\sum_{i=1}^n\sigma_i)\le h-\ell$,
this proves the lemma.

\end{proof}
% \begin{proof}[Proof of \Cref{lem:uniform-sampling}]
% Choose $s_i=e^{(b-a)R^2}/n$ for every $i\in[n]$.
% By \eqref{eq:bound-kernel}, these scores satisfy $s_i\ge\sigma_i$.
% They give $\pi_i=1/n$ and $S=e^{(b-a)R^2}$, so
% \cref{lem:upper-sampling} yields the claimed sample bound
% and approximation guarantee. Each sampled occurrence has
% the same weight $n/m$, which cancels in the attention ratio.
% \end{proof}

\computescores*

\begin{proof} 

We group keys and queries by norm, compute sampling scores
within each pair of groups by convex optimization, and then
bound the sum of these scores using \cref{lem:total-sensitivity}.
\paragraph{Partitioning by norm.}
Set $M=\max\{1,\lceil R^2\rceil\}$.
Partition the keys into norm bins
\[
    I_1=\left\{i\in[n]:\|k_i\|\le\frac{r_k}{M}\right\},
    \qquad
    I_u=\left\{i\in[n]:
        \frac{(u-1)r_k}{M}<\|k_i\|\le\frac{ur_k}{M}
    \right\}
    \quad (u=2,\ldots,M).
\]
Similarly, partition the possible query norms into $M$ intervals
of width $r_q/M$, assigning each endpoint to the interval on
its left and including zero in the first interval.

For $u,v\in[M]$, denote
\[
    A_{uv}=\frac{R^2(u-1)(v-1)}{M^2},
    \quad
    B_{uv}=\frac{R^2uv}{M^2},
    \quad
    \ell_{uv}=\min\{aA_{uv},aB_{uv}\},
    \quad
    h_{uv}=\max\{bA_{uv},bB_{uv}\}.
\]
If $i\in I_u$ and an admissible query $q$ has norm in the
$v$-th interval, then
\[
    A_{uv}\le\frac{\|k_i\|\|q\|}{\sqrt d}\le B_{uv},
    \qquad
    \ell_{uv}\le\frac{\ip{k_i}{q}}{\sqrt d}\le h_{uv}.
\]
Thus, for each nonempty $I_u$, the compact convex domain
\[
    \widetilde{\mathcal Q}_{uv}
    =\left\{z\in\R^d:
        \|z\|\le\frac{vr_q}{M},\quad
        \ell_{uv}\le\frac{\ip{k_j}{z}}{\sqrt d}\le h_{uv}
        \quad\forall j\in I_u
    \right\}
\]
contains every admissible query whose norm lies in the
$v$-th interval.

\paragraph{Bounding sensitivities locally.}
For $i\in I_u$, define the local sensitivity
\[
    \sigma_i^{uv}
    =\max_{z\in\widetilde{\mathcal Q}_{uv}}
        \frac{e^{\ip{k_i}{z}/\sqrt d}}
             {\sum_{j\in I_u}e^{\ip{k_j}{z}/\sqrt d}},
\]
with $\sigma_i^{uv}=0$ when
$\widetilde{\mathcal Q}_{uv}$ is empty.
For an admissible query $q$ whose norm lies in the $v$-th
interval,
\[
    \frac{e^{\ip{k_i}{q}/\sqrt d}}
         {\sum_{j=1}^n e^{\ip{k_j}{q}/\sqrt d}}
    \le
    \frac{e^{\ip{k_i}{q}/\sqrt d}}
         {\sum_{j\in I_u}e^{\ip{k_j}{q}/\sqrt d}}
    \le\sigma_i^{uv}.
\]
Consequently, $\sigma_i\le\sum_{v=1}^M\sigma_i^{uv}$.

\paragraph{Computing the sampling scores.}
For $i\in I_u$, define
\[
    f_i^u(z):=\log\!\left(
        \sum_{j\in I_u}e^{\ip{k_j-k_i}{z}/\sqrt d}
    \right).
\]
By \cref{lem:log-sum-exp}, this objective is convex and continuously
differentiable, with an explicit gradient. Whenever
$\widetilde Q_{uv}$ is nonempty, it is the logarithm of the reciprocal
of token $i$'s attention weight within $I_u$, so
\[
    \min_{z\in\widetilde Q_{uv}}f_i^u(z)
    =-\log\sigma_i^{uv}.
\]
Moreover, for every $z\in\widetilde Q_{uv}$, $0\le f_i^u(z)\le\log|I_u|+h_{uv}-\ell_{uv}.$
The lower bound follows from the term $j=i$, and the upper bound
follows from the defining inner-product bounds of
$\widetilde Q_{uv}$.

By \cref{lem:norm-affine-constraints} and the definition of
$\widetilde Q_{uv}$, each nonempty domain is compact and convex.
Applying \cref{lem:additive-convex-minimization} with
$\eta=\log 2$, we  obtain
$z_i^{uv}\in\widetilde Q_{uv}$ satisfying
\begin{equation}\label{eq:log2-bound}
        -\log\sigma_i^{uv}
    \le f_i^u(z_i^{uv})
    \le-\log\sigma_i^{uv}+\log2.
\end{equation}

Set $s_i^{uv}:=2e^{-f_i^u(z_i^{uv})}$.
Exponentiating \cref{eq:log2-bound} gives $\sigma_i^{uv}\le s_i^{uv}\le2\sigma_i^{uv}.$
For empty domains, set $s_i^{uv}:=0$, so the same inequalities
hold under the convention $\sigma_i^{uv}=0$. 
We are now ready to describe the sampling scores $s_i$. 
For $i\in I_u$,
define
\[
    s_i:=\sum_{v=1}^M s_i^{uv}.
\]
Together with the local sensitivity bound, this gives $\sigma_i
    \le\sum_{v=1}^M\sigma_i^{uv}
    \le s_i
    \le2\sum_{v=1}^M\sigma_i^{uv}.$
Since $0\in Q$, we have $\sigma_i\ge1/n$, so the scores are positive.

\paragraph{Bounding the sum within each pair of bins.}
Write $D_{uv}=B_{uv}-A_{uv}
    =\frac{R^2(u+v-1)}{M^2}
    \le\frac{2R^2}{M}\le2.$
For $x_+=\max\{x,0\}$, the endpoint definitions give
\[
    \ell_{uv}=aB_{uv}-a_+D_{uv},
    \qquad
    h_{uv}=bB_{uv}+(-b)_+D_{uv}.
\]
Since $a_++(-b)_+\le1$,
\[
    h_{uv}-\ell_{uv}\le(b-a)B_{uv}+2,
    \qquad
    \frac{B_{uv}+h_{uv}}2
    \le\frac{1+b}{2}B_{uv}+1.
\]

The keys in $I_u$ have norm at most $ur_k/M$, and queries
in $\widetilde{\mathcal Q}_{uv}$ have norm at most $vr_q/M$.
The product of these bounds, divided by $\sqrt d$, is $B_{uv}$.
For each nonempty domain, \cref{lem:total-sensitivity} therefore gives
\[
    \sum_{i\in I_u}\sigma_i^{uv}
    \le\exp\left(
        \min\left\{
            h_{uv}-\ell_{uv},
            \frac{B_{uv}+h_{uv}}2
        \right\}
    \right)
    \le e^{\kappa B_{uv}+2}.
\]
The same bound holds trivially for empty domains. Hence
\[
    \sum_{i=1}^n s_i
    \le2e^2\sum_{u=1}^M\sum_{v=1}^M e^{\kappa B_{uv}}.
\]

\paragraph{Summing over the bins.}
If $R^2<1$, then $M=1$, and the desired bound follows
immediately. Otherwise, $M=\lceil R^2\rceil\le2R^2$.
Since $u,v\le M$,
\[
    B_{uv}
    \le\frac{R^2(u+v)}{2M}
    =R^2-\frac{R^2}{2M}\bigl((M-u)+(M-v)\bigr).
\]
Using $R^2/(2M)\ge1/4$, we obtain
\[
    \sum_{u=1}^M\sum_{v=1}^M e^{\kappa B_{uv}}
    \le e^{\kappa R^2}
        \left(\sum_{t=0}^{M-1}e^{-\kappa t/4}\right)^2
    \le\frac{e^{\kappa R^2}}{(1-e^{-\kappa/4})^2}.
\]
Finally, $0<\kappa\le1$ and
$1-e^{-\kappa/4}\ge\kappa/8$, so
\[
    \sum_{i=1}^n s_i
    \le\frac{128e^2}{\kappa^2}e^{\kappa R^2}.
\]
Each key participates in at most $M$ optimization problems,
giving at most $nM$ such problems in total.
\end{proof}

\section{Proofs from \cref{sec:importance_lower}}\label{sec:importance_lower_appendix}

In this section, we give the proof of \cref{thm:lb-attn-full}, the lower bound in the importance sampling regime. \begin{theorem}[Lower bound, \Cref{item:importance}]\label{thm:lb-attn-full}
Consider $r_k,r_q>0$, $\eps\in(0,1/2]$, and $-1\le a<b\le1$ in the importance sampling regime (\cref{def:small_cos_sim}).
Consider $n$ and $d=\omega(\log n)$, and let $\xi=\sqrt{\frac{6\log(2n)}{d-3}}=o(1)$.
Pick any integer $m$ satisfying
\begin{equation}\label{eq:finite-conditions-appendix}
    2\le m\le n,\qquad m\le \min\left\{
    \mainterm{e^{\frac{1+b}{2}R^2}},
    \mainterm{e^{(b-a)R^2}}
    \right\}\secondary{\cdot e^{-2\xi R^2}/32}.
\end{equation}
There is a fixed dataset $(K,V)$ of $n$ key-value pairs, with keys of norm $r_k$ and unit-norm values in $\R^d$, together with queries $q_1,\ldots,q_m$ of norm $r_q$, such that every key $k\in K$ satisfies
\[
    a\le\frac{\ip{k}{q_i}}{\|k\|\|q_i\|}\le b
    \qquad\text{for all }i\in[m].
\]
For this dataset, satisfying~\Cref{eq:attention-guarantee} at each
of these queries requires $\OPTattn\ge2m/3$.
\end{theorem}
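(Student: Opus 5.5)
We reuse the group construction and the analysis of \Cref{thm:lb-attn} unchanged. The only thing we need to redo is the choice of geometry parameters in \cref{eq:lb-geometry}. The plan is to show that, for every $-1\le a<b\le1$, there is a geometry in which all of the following hold:
\begin{itemize}
\item matching pairs have scaled inner product exactly $bR^2$;
\item every key/query cosine lies in $[a,b]$;
\item nonmatching pairs have scaled inner product at most $\bigl(b-(1-2\xi)L\bigr)R^2$, where $L:=\min\{\frac{1+b}{2},\,b-a\}$.
\end{itemize}
Once this gap holds, the attention-mass computation and the value-vector argument from \Cref{sub:proof_lb} go through verbatim with $(b-a)$ replaced by $L$.

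\textbf{Construction.} Let $u_1,\ldots,u_m$ be the vectors from \cref{lem:code} in the last $d-3$ coordinates, with pairwise inner products at most $\xi$ in magnitude. Set
\[
\alpha=\frac{L}{1+\xi},\qquad \beta=\alpha-b .
\]
Here $\beta$ may now be negative, for example when $a\ge0$ or when $L=\frac{1+b}{2}$ is small relative to $b$. To handle this, use a signed anisotropic shift. Take key component $\sqrt{|\beta|}\,e_0$ and query component $-\operatorname{sign}(\beta)\sqrt{|\beta|}\,e_0$, and pad both with $\sqrt{1-\alpha-|\beta|}$ along $e_K$ and $e_Q$ respectively. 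Then
\[
\frac{\ip{k_j}{q_i}}{\sqrt d}=\bigl(\alpha\ip{u_j}{u_i}-\beta\bigr)R^2 \quad\text{in all cases.}
\]

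\textbf{Well-definedness.} Since $b>a\ge-1$, we have $L>0$ and so $\alpha>0$. It remains to check $\alpha+|\beta|\le1$.
\begin{itemize}
\item If $\beta\ge0$, then $\alpha+\beta=2\alpha-b\le 2L-b\le 1$, using $L\le\frac{1+b}{2}$.
\item If $\beta<0$, then $\alpha+|\beta|=b\le1$.
\end{itemize}

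\textbf{Inner products.} For matching pairs, $\alpha-\beta=b$ gives cosine exactly $b$. For nonmatching pairs the scaled inner product lies in $[\,b-\alpha(1+\xi),\;b-\alpha(1-\xi)\,]R^2$.
\begin{itemize}
\item The lower end is $b-L\ge a$, since $L\le b-a$. So cosines are at least $a$.
\item The upper end is at most $b$, so cosines are at most $b$.
\item The upper end is also at most $\bigl(b-(1-2\xi)L\bigr)R^2$, because $\frac{1-\xi}{1+\xi}\ge1-2\xi$.
\end{itemize}
Hence the log-ratio $\bigl(\ip{k_j}{q_i}-\ip{k_i}{q_i}\bigr)/\sqrt d$ is at most $-(1-2\xi)LR^2$.

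\textbf{Finishing.} Group sizes $N_i$ are chosen as in \Cref{sub:proof_lb}, so $N_j/N_i\le2$. Then
\[
1-\alpha_i\le 2m\,e^{-(1-2\xi)LR^2}\le 2m\,e^{-LR^2+2\xi R^2}\le\frac1{16},
\]
where the middle step uses $L\le1$ and the last step uses \cref{eq:finite-conditions-appendix}, which says $m\le e^{LR^2}e^{-2\xi R^2}/32$. From here we copy the rest of the argument:
\begin{itemize}
\item Choose nearly orthogonal values with $|\ip{v_i}{v_j}|\le 1/8$.
\item This gives $\|\Attn(K,V,q_i)-v_i\|\le1/8$.
\item If no copy of $(k_i,v_i)$ is retained, the error is at least $3/4>\eps$.
\item So $\Pr[H_i]\ge2/3$ for each $i$, and $\E|I|\ge 2m/3$.
\end{itemize}

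\textbf{Main obstacle.} The delicate step is the parameter choice. In the importance regime, $-\beta$ can exceed the interval's constraint in either direction. This is why we need the signed $e_0$ shift when $\beta<0$, and why $\alpha$ must be tied to $L$ rather than to $b-a$, so that the norm budget $\alpha+|\beta|\le1$ survives when $b-2a>1$.
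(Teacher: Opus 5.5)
Your proposal is correct and follows the paper's proof essentially verbatim. It uses the same signed $e_0$ shift with $\beta=\alpha-b$, the same norm-budget check $\alpha+|\beta|\le 1$, and the same gap $\alpha(1-\xi)R^2\ge(1-2\xi)\min\{b-a,\frac{1+b}{2}\}R^2$, fed into the unchanged group and value-vector argument. The only difference is cosmetic: you take $\alpha=L/(1+\xi)$ where the paper takes $\alpha=\min\{\frac{1+b}{2},\frac{b-a}{1+\xi}\}$, and your explicit use of $L\le 1$ to get from the stated condition on $m$ to $2me^{-(1-2\xi)LR^2}\le\frac{1}{16}$ fills in a step the paper leaves implicit.
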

The construction and analysis are essentially the same as that in the uniform sampling regime \cref{sec:lower} with slightly more complicated parameter settings in order to capture all possible values of $a,b$.

\paragraph{Goal.}
We construct keys and queries such that each matching pair $(k_i,q_i)$
has scaled inner product $bR^2$, exceeding that of every nonmatching
pair $(k_j,q_i)$, $j\ne i$, by at least
$(1-o(1))\min\{b-a,(1+b)/2\}R^2$, where $R^2=r_kr_q/\sqrt d$.
We will then use this gap to make each query concentrate
its attention on copies of its matching key, forcing an accurate
summary to retain many key groups.

\paragraph{Construction.}
Let $e_0,e_K,e_Q$ be the first three standard basis vectors.
Choose $m$ unit vectors satisfying \cref{lem:code} in dimension
$D=d-3$, with pairwise inner products bounded in magnitude by $\xi = \sqrt{\frac{6\log(2n)}{D}}$.
Embed them into the last $d-3$ coordinates of $\R^d$, and denote the
resulting vectors by $u_1,\ldots,u_m$. Define 
\begin{equation}\label{eq:lb-geometry-appendix}%\label{eq:geometry}
\begin{aligned}
    k_i&=r_k\bigl(\underbrace{\sqrt{\alpha}\,u_i}_{\text{Shared group direction}}+\underbrace{\sqrt{|\beta|}\,e_0}_{\text{Anisotropic shift}} + \quad \underbrace{\sqrt{1-\alpha-|\beta|}\,e_K}_{\text{Norm padding}}\; \bigr), \\
    q_i&=r_q\bigl(\underbrace{\sqrt{\alpha}\,u_i}_{\text{Shared group direction}}- \underbrace{\operatorname{sgn}(\beta) \sqrt{|\beta|}\,e_0}_{\text{Anisotropic shift}} \quad + \quad \underbrace{\sqrt{1-\alpha-|\beta|}\,e_Q}_{\text{Norm padding}}\; \bigr),
    % q_i&=r_q\bigl(\sqrt{\alpha}\,u_i
      % -\sqrt{|\beta|}\,e_0\bigr)+ r_q\sqrt{1-\alpha-|\beta|}\,e_Q,
\end{aligned}
\end{equation}
where we set
\begin{equation}\label{eq:geometry-parameters-main-appendix}
    \alpha=\min\left\{\frac{1+b}{2},\frac{b-a}{1+\xi}\right\},
    \qquad
    \beta=\alpha - b = \min\left\{\frac{1-b}{2}, \frac{-a-\xi b}{1+\xi}\right\}.
    % h=bR,\qquad
    % L=\min\left\{\frac{R+h}{2},\frac{(b-a)R}{1+\xi}\right\},
    % \qquad \beta=L-h.
\end{equation}
Note that $\alpha \geq 0$ as $b \geq -1$ and $b \geq a$.
Note that $\alpha + |\beta| \leq 1$ as $\alpha + \beta = 2\alpha - b \leq 2\left(\frac{1 + b}{2}\right) - b = 1$ and $\alpha - \beta = b \leq 1$.
Therefore, $1 - \alpha - |\beta| \geq 0$ and the keys and queries are well defined with norms exactly equal to $\|k_i\| = r_k$ and $\|q_i\| =r_q$.
The components along $u_i$ and $e_0$ serve the purpose of achieving the desired inner products while the padding along $e_K$ and $e_Q$ increases the norms of the keys/queries to fix the denominator of cosine similarity.

\paragraph{Verifying the inner products.}

It remains to bound the scaled inner products between queries and keys.
For any $i \in [m]$, the inner product between the matching pair is:
\[
    \frac{\ip{k_i}{q_i}}{\sqrt d} = \left(\alpha - \beta\right)R^2 = bR^2.
\]
For any non-matching pair $i \neq j$,
\[
    \left(b-\alpha\left(1+\xi\right)\right)R^2
    \le \frac{\ip{k_i}{q_j}}{\sqrt d}
    =\left(\alpha \ip{u_i}{u_j}-\beta\right)R^2\le bR^2.
\]
The second choice in the minimum defining $\alpha$ ensures $b-\alpha\left(1+\xi\right)\ge a.$
Thus, every key-query cosine similarity belongs to $[a,b]$, since
$R^2\sqrt d=r_kr_q=\|k_i\|\|q_j\|$.

For any $i \neq j$, the scaled inner product between key-query pairs at the same index exceeds that of any other key-query pair by
\[
\frac{\ip{k_i}{q_i} - \ip{k_j}{q_i}}{\sqrt d}
\geq bR^2-\left(\alpha\xi-\beta\right)R^2
=\alpha\left(1-\xi\right)R^2.
\]
By the definition of $\alpha$,
\begin{equation}\label{eq:lb-kv-gap}
\frac{\ip{k_i}{q_i} - \ip{k_j}{q_i}}{\sqrt d}
\geq \min\left\{b-a,\frac{1+b}{2}\right\}R^2\frac{1-\xi}{1+\xi}
\ge (1 - 2\xi)\min\left\{b-a,\frac{1+b}{2}\right\}R^2.
\end{equation}
The last inequality uses $(1-x)/(1+x)\ge1-2x$ for $x\ge0$.

\paragraph{Constructing the value vectors.}
Apply \cref{lem:code} a second time to obtain $m$ unit value vectors
$v_1,\ldots,v_m\in\R^d$ satisfying
\begin{equation}\label{eq:lb-value-inner-products-appendix}
    |\ip{v_i}{v_j}|\le\frac18
    \qquad\text{for all }i,j\in[m]\text{ with }i\ne j.
\end{equation}
Since $m\le n$, the assumption $d=\omega(\log n)$ ensures that this
is possible.

\subsection{Full construction}
\begin{proof}[Proof of \cref{thm:lb-attn-full}]
Choose $t=\lfloor n/m\rfloor\ge1$ and
$0\le s<m$ such that $n=tm+s$.
For $i \in [m]$, let $N_i = t + 1$ if $i \leq s$ and $N_i = t$ if $i > s$.
The attention instance $K, V$ will have $N_i$ copies of the key-value pair $(k_i, v_i)$.
By the setting of $t, s, N_i$, there will be exactly $n$ key-value pairs.
As $t \geq 1$, $N_j/N_i \leq 2$ for all $i, j \in [m]$.

At distinct query $q_i$, let $\alpha_i$ be the attention mass on the copies of $(k_i,v_i)$.
By~\cref{eq:lb-kv-gap},
\begin{equation*}%\label{eq:mass}
 \alpha_i
 =\frac{N_i e^{\ip{k_i}{q_i}/\sqrt d}}{\sum_{j=1}^m N_j e^{\ip{k_j}{q_i}/\sqrt d}}
 =\frac{1}{1 + \sum_{j\neq i} \frac{N_j}{N_i} e^{(\ip{k_j}{q_i} - \ip{k_i}{q_i})/\sqrt d}}
 \geq \frac1{1+2(m-1)e^{-(1-2\xi)\min\{b-a,(1+b)/2\}R^2}}.
\end{equation*}

Recall the condition $m\le e^{(1-2\xi)\min\{b-a,(1+b)/2\}R^2}/32$.
Then,
\[
 1-\alpha_i\le2me^{-(1-2\xi)\min\{b-a,(1+b)/2\}R^2}\le\frac1{16}.
\]

So, attention focuses significant mass on $v_i$:
\begin{equation}\label{eq:lb-target-error}
 \|\Attn(K,V,q_i)-v_i\|
 \le2(1-\alpha_i)\le\frac18.
\end{equation}
The first inequality follows because all value vectors have norm one, so their distance from $v_i$ is at most two.

Let $H_i$ be the event that the coreset retains at least one copy
of $(k_i,v_i)$.
On the complementary event, the approximation at $q_i$ is a convex combination of the value vectors $v_j$ with $j\ne i$. This follows
directly from the nonnegative weights in the definition of query-oblivious weighted token-eviction. Therefore
\[
 \ip{v_i}{\widehat\Attn(K,V,q_i)}\le \frac18
\]
as all distinct value vectors are nearly orthogonal.

On the other hand,~\cref{eq:lb-target-error} gives
$\ip{v_i}{\Attn(K,V,q_i)}\ge1-1/8$.
Projecting the error onto the unit vector $v_i$ yields, whenever
$H_i$ does not hold,
\begin{equation*}
\begin{aligned}
 \|\widehat\Attn(K,V,q_i)-\Attn(K,V,q_i)\|
 &\ge |\ip{v_i}{\widehat\Attn(K,V,q_i)}-\ip{v_i}{\Attn(K,V,q_i)}|\\
 &\ge1-\frac18-\frac18=\frac34>\eps.
\end{aligned}
\end{equation*}
Hence success at the fixed query $q_i$ implies $H_i$.

The per-query guarantee of a query-oblivious weighted token-eviction policy now gives $\Pr[H_i]\ge2/3$ for each $i$ separately.
Since each retained pair represents at most one group,
\[
 \E|I|\ge\sum_{i=1}^m\Pr[H_i]\ge\frac{2m}{3}.
\]
By definition, $\OPTattn$ is the optimal worst-case number of tokens retained by an algorithm, so, from above, $\OPTattn \geq 2m/3$.
\end{proof}

\newpage
\section{Experiments}

\subsection{Q/K Cosine similarity}
\label{sec:qk_experiments}
\begin{figure}[H]
    \centering
    \includegraphics[width=\linewidth]{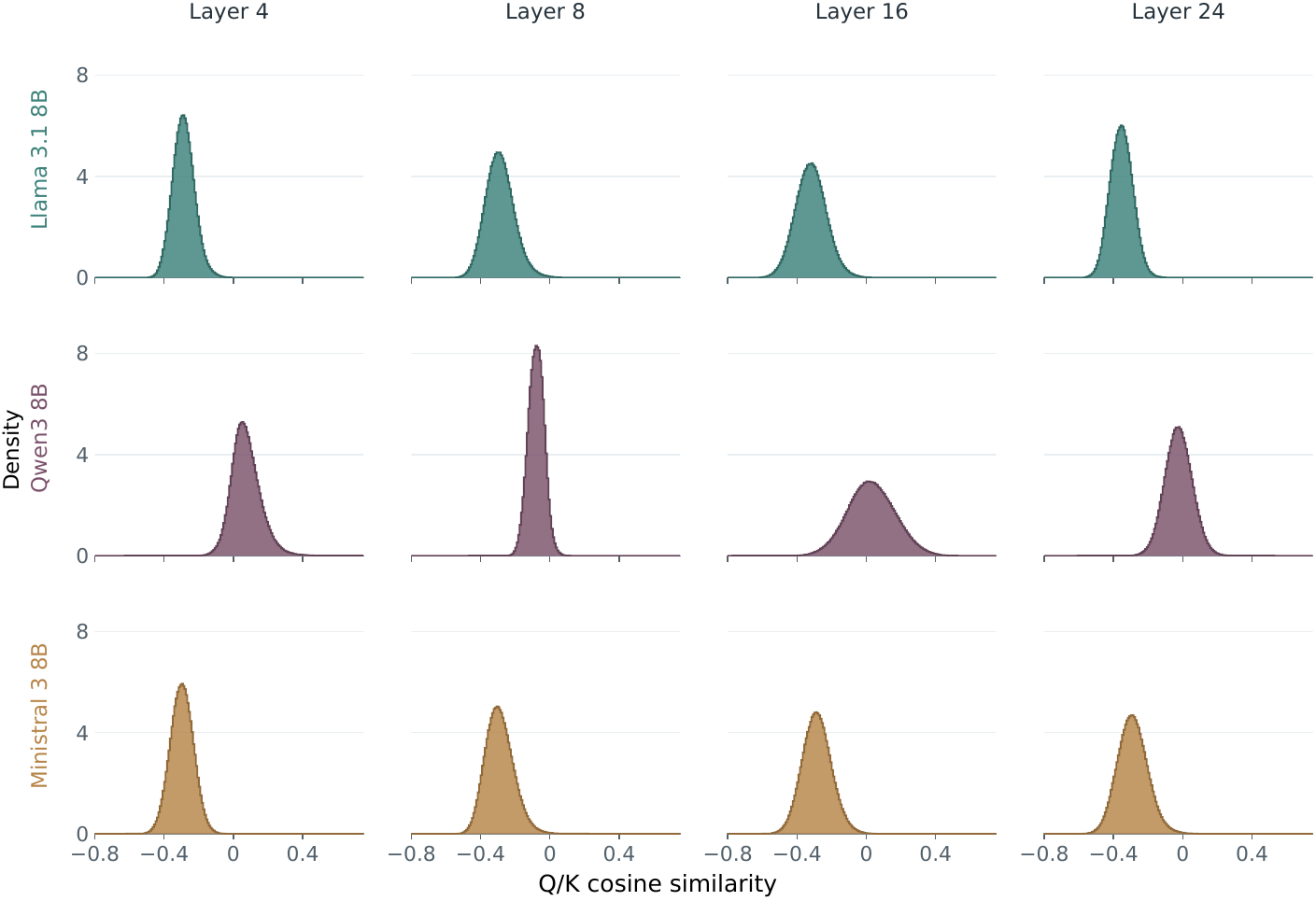}
    \caption{\textbf{Query–key cosine similarities across models and attention layers.} Rows show Llama 3.1 8B, Qwen3 8B, Ministral 3 8B; columns show layers 4, 8, 16, 24. All panels use the same prepared LongBench v2 example. Each panel uses an input of 8,192 tokens and compares the final 256 queries from query head 0 against 7,935 earlier-context keys from the corresponding KV head 0. The initial key  is excluded. Cosines are computed after model normalization and RoPE.}
    \label{fig:cos_sim_gridplot}
\end{figure}

\begin{figure}[H]
    \centering
    \includegraphics[width=0.7\linewidth]{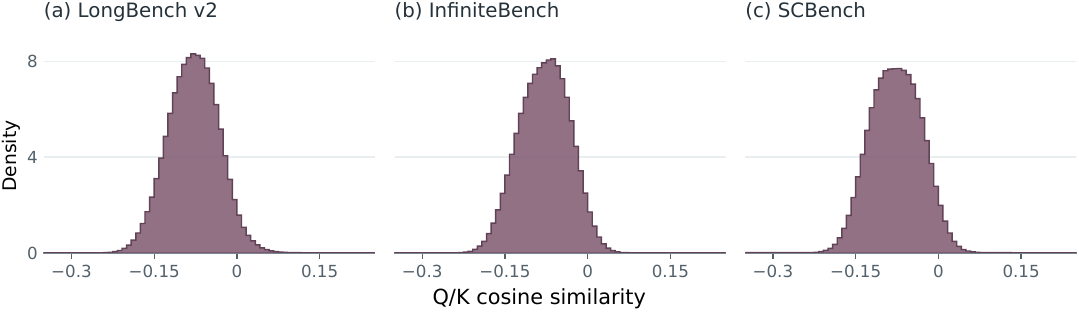}
    \caption{\textbf{Query–key cosine similarities across inputs.} The plot uses Qwen3 8B at layer 8 on (a) LongBench v2, (b) InfiniteBench, (c) SCBench. Each panel uses an input of 8,192 tokens and compares the final 256 queries from query head 0 against 7,935 earlier-context keys from the corresponding KV head 0. The initial key (attention sink) is excluded. Cosines are computed after model normalization and RoPE.}
    \label{fig:cos_sim_datasets}
\end{figure}

\begin{figure}[H]
    \centering
    \includegraphics[width=\linewidth]{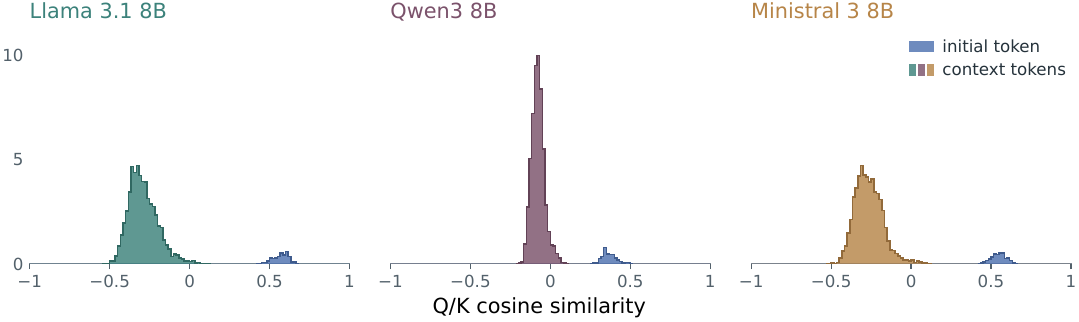}
    \caption{\textbf{Query–key cosine similarities with initial token (attention sink)} Qwen3 8B at layer 8 on LongBench v2 example input of 8,192 tokens and compares the final 256 queries from query head 0 against a sample of 16 earlier-context keys from the corresponding KV head 0 and the initial key (attention sink). Cosines are computed after model normalization and RoPE.}
    \label{fig:cos_sim_with_sink}
\end{figure}

\subsection{Key/Query norms}
\begin{figure}[H]
    \centering
    \includegraphics[width=\linewidth]{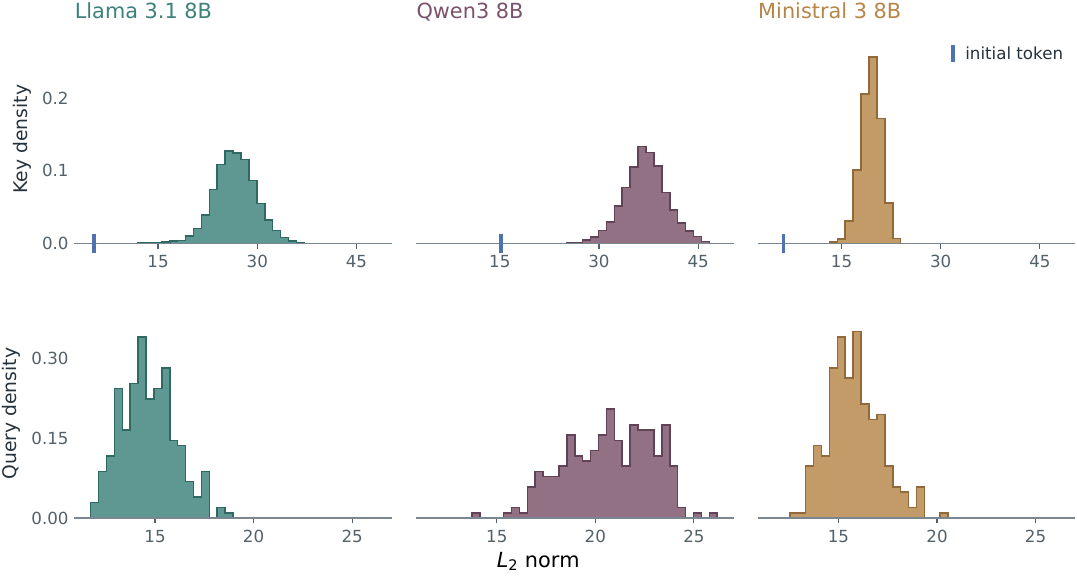}
    \caption{\textbf{Key and query magnitudes across models.} Layer 8 on the LongBench v2 input (8,192 tokens). Columns show Llama 3.1 8B, Qwen3 8B, Ministral 3 8B. The top row uses 7,935 keys per model: all eligible earlier-context keys. Blue baseline marks indicate the initial key (attention sink) norm in each model. The bottom row uses the final 256 queries.}
    \label{fig:norms}
\end{figure}

\begin{comment}
\subsection{Comparison with theory}
\begin{figure}[H]
    \centering
    \includegraphics[width=\linewidth]{img/delta_vs_delta_star_combined_all_datasets_q0.pdf}
    \caption{\textbf{Empirical cosine widths versus the reference crossover.} Each point represents a model, dataset, context length, and attention layer using prepared inputs from LongBench v2, InfiniteBench (KVPress), and SCBench (query head 0). Colors identify models, shapes distinguish layers. Datasets and context lengths are not visually distinguished. The empirical width is $\delta=[P_{99}(C)-P_1(C)]/2$, where $C$ pools signed Q/K cosine similarities within each setup after excluding BOS and query-tail keys. The reference $\delta^{*}$ satisfies $\kappa(\delta^{*})R=\ln n$, with $\kappa(\delta)=\min\{2\delta,(1+\delta)/2\}$, $R=(\operatorname{mean}_i\|k_i\|_2)(\operatorname{mean}_j\|q_j\|_2)/\sqrt{128}$, and $n$ the number of eligible context keys. The dashed diagonal marks equality; shading indicates $\delta<\delta^{*}$.}
    \label{fig:delta_vs_delta*}
\end{figure}
\end{comment}

\end{document}